\newcommand{\prob}{\mathrm{Pr}}
\newcommand{\variance}{\mathrm{Var}}
\newcommand{\E}{\mathbb{E}}
\newcommand{\R}{\mathbb{R}}
\newcommand{\cvar}{\Phi_\alpha}
\newcommand{\var}{\nu_\alpha}
\newcommand{\cD}{\mathcal{D}}
\newcommand{\cX}{\mathcal{X}}
\newcommand{\cY}{\mathcal{Y}}
\newcommand{\cS}{\mathcal{S}}
\newcommand{\cA}{\mathcal{A}}
\newcommand{\cK}{\mathcal{K}}
\newcommand{\bx}{\mathbf{x}}
\newcommand{\bX}{\mathbf{X}}
\newcommand{\is}{\textrm{IS}}
\newcommand{\Dth}{\frac{\partial}{\partial \theta_j}}
\newcommand{\Dom}{\frac{\partial}{\partial \omega}}
\newcommand{\Dt}[1]{\frac{\partial {#1}}{\partial \theta_j}}
\newcommand{\T}{^{\top}}
\newcommand{\dcvar}{\Delta_{j;N}} 
\newcommand{\dcvarIS}{\Delta_{j;N}^{\is}} 
\newcommand{\evar}{\tilde{v}}  
\newcommand{\alggiven}{\textbf{Given:}}
\newcommand{\algreturn}{\textbf{Return:}}
\DeclareMathOperator*{\argmin}{arg\,min}
\newcommand{\citealt}[1]{\citeauthor{{#1}}, \citeyear{{#1}}}
\newcommand{\citet}[1]{\citeauthor{{#1}} (\citeyear{{#1}})}
\newcommand{\citep}[1]{\cite{{#1}}}
\newtheorem{theorem}{Theorem}
\newtheorem{assumption}{Assumption}
\newtheorem{proposition}[theorem]{Proposition}
\begin{document}

\title{Optimizing the CVaR via Sampling}

\author{
Aviv Tamar, Yonatan Glassner, \and Shie Mannor \\
Electrical Engineering Department\\
The Technion - Israel Institute of Technology\\
Haifa, Israel 32000\\
\{avivt, yglasner\}@tx.technion.ac.il, shie@ee.technion.ac.il \\
}

\maketitle

\begin{abstract}
\begin{quote}
Conditional Value at Risk (CVaR) is a prominent risk measure that is being used extensively in various domains. We develop a new formula for the gradient of the CVaR in the form of a conditional expectation. Based on this formula, we propose a novel sampling-based estimator for the gradient of the CVaR, in the spirit of the likelihood-ratio method. We analyze the bias of the estimator, and prove the convergence of a corresponding stochastic gradient descent algorithm to a local CVaR optimum. Our method allows to consider CVaR optimization in new domains. As an example, we consider a reinforcement learning application, and learn a risk-sensitive controller for the game of Tetris.
\end{quote}
\end{abstract}

\section{Introduction}\label{sec:intro}

Conditional Value at Risk (CVaR; \citealt{rockafellar2000optimization}) is an established risk measure that has found extensive use in finance among other fields.
For a random payoff $R$, whose distribution is parameterized by a controllable parameter $\theta$, the $\alpha$-CVaR is defined as the expected payoff over the $\alpha \%$ worst outcomes of $Z$:
\begin{equation*}
    \Phi(\theta) = \E^\theta \left[ \left. R \right| R \leq \nu_\alpha(\theta) \right],
\end{equation*}
where $\nu_\alpha(\theta)$ is the $\alpha$-quantile of $R$. CVaR optimization aims to find a parameter $\theta$ that maximizes $\Phi(\theta)$.

When the payoff is of the structure $R = f_\theta(X)$, where $f_\theta$ is a deterministic function, and $X$ is random but does not depend on $\theta$, CVaR optimization may be formulated as a stochastic program, and solved using various approaches \cite{rockafellar2000optimization,hong_simulating_2009,iyengar2013fast}. Such a payoff structure is appropriate for certain domains, such as portfolio optimization, in which the investment strategy generally does not affect the asset prices. However, in many important domains, for example queueing systems, resource allocation, and reinforcement learning, the tunable parameters also control the \emph{distribution} of the random outcomes. Since existing CVaR optimization methods are not suitable for such cases, and due to increased interest in risk-sensitive optimization recently in these domains \cite{tamar2012policy,prashanth2013actor}, there is a strong incentive to develop more general CVaR optimization algorithms.

In this work, we propose a CVaR optimization approach that is applicable when $\theta$ also controls the distribution of $X$. The basis of our approach is a new formula that we derive for the CVaR gradient $\frac{\partial \Phi(\theta)}{\partial \theta}$ in the form of a conditional expectation. Based on this formula, we propose a sampling-based estimator for the CVaR gradient, and use it to optimize the CVaR by stochastic gradient descent.

In addition, we analyze the bias of our estimator, and use the result to prove convergence of the stochastic gradient descent algorithm to a local CVaR optimum. Our method allows us to consider CVaR optimization in new domains. As an example, we consider a reinforcement learning application, and learn a risk-sensitive controller for the game of Tetris. To our knowledge, CVaR optimization for such a domain is beyond the reach of existing approaches. Considering Tetris also allows us to easily interpret our results, and show that we indeed learn sensible policies.

We remark that in certain domains, CVaR is often not maximized directly, but used as a constraint in an optimization problem of the form $\max_\theta \E^\theta [R] \text{ s.t. } \Phi(\theta) \geq b$. Extending our approach to such problems is straightforward, using standard penalty method techniques (see, e.g., \citealt{tamar2012policy}, and \citealt{prashanth2013actor}, for a such an approach with a variance-constrained objective), since the key component for these methods is the CVaR gradient estimator we provide here. Another appealing property of our estimator is that it naturally incorporates importance sampling, which is important when $\alpha$ is small, and the CVaR captures \emph{rare} events.

\vspace{-10pt}
\paragraph{Related Work}
Our approach is similar in spirit to the \emph{likelihood-ratio} method (LR; \citealt{glynn1990likelihood}), that estimates the gradient of the \emph{expected} payoff. The LR method has been successfully applied in diverse domains such as queueing systems, inventory management, and financial engineering \cite{Fu2006gradients}, and also in reinforcement learning (RL; \citealt{sutton_reinforcement_1998}), where it is commonly known as the \emph{policy gradient} method \cite{baxter2001infinite,peters_reinforcement_2008}. Our work extends the LR method to estimating the gradient of the CVaR of the payoff.

Closely related to our work are the studies of \citet{hong_simulating_2009} and \citet{scaillet_nonparametric_2004}, who proposed perturbation analysis style estimators for the gradient of the CVaR, for the setting mentioned above, in which $\theta$ {\em does not affect} the distribution of $X$. Indeed, their gradient formulae are different than ours, and do not apply in our setting.

LR gradient estimators for other risk measures have been proposed by \citet{borkar2001sensitivity} for exponential utility functions, and by \citet{tamar2012policy} for mean--variance. These measures, however, consider a very different notion of risk than the CVaR. For example, the mean--variance measure is known to underestimate the risk of rare, but catastrophic events \cite{agarwal2004risks}.

Risk-sensitive optimization in RL is receiving increased interest recently. A mean-variance criterion was considered by \citet{tamar2012policy} and \citet{prashanth2013actor}. \citet{morimura_nonparametric_2010} consider the expected return, with a CVaR based risk-sensitive policy for guiding the exploration while learning. Their method, however, does not scale to large problems. \citet{borkar2014risk} optimize a CVaR constrained objective using dynamic programming, by augmenting the state space with the accumulated reward. As such, that method is only suitable for a finite horizon and a small state-space, and \emph{does not scale-up} to problems such as the Tetris domain we consider. A function approximation extension of \cite{borkar2014risk} is mentioned, using a three time scales stochastic approximation algorithm. In that work, three different learning rates are decreased to 0, and convergence is determined by the slowest one, leading to an overall slow convergence. In contrast, our approach requires only a single learning rate.
Recently, \citet{prashanth2014cvar} used our gradient formula of Proposition \ref{prop:grad_general} (from a preliminary version of this paper) in a two time-scale stochastic approximation scheme to show convergence of CVaR optimization.
Besides providing the theoretical basis for that work, our current convergence result (Theorem \ref{thm:CVaRPG_convergence}) obviates the need for the extra time-scale, and results in a simpler and faster algorithm.

\section{A CVaR Gradient Formula}\label{sec:CVaR_Sensitivity}

In this section we present a new LR-style formula for the gradient of the CVaR. This gradient will be used in subsequent sections to optimize the CVaR with respect to some parametric family.
We start with a formal definition of the CVaR, and then present a CVaR gradient formula for 1-dimensional random variables. We then extend our result to the multi-dimensional case.

Let $Z$ denote a random variable with a cumulative distribution function (C.D.F.) $F_Z(z) = \prob(Z \leq z)$. For convenience, we assume that $Z$ is a continuous random variable, meaning that $F_Z(z)$ is everywhere continuous. We also assume that $Z$ is bounded.
Given a confidence level $\alpha\in (0,1)$, the $\alpha$-Value-at-Risk, (VaR; or $\alpha$-quantile) of $Z$ is denoted $\var(Z)$, and given by
\begin{equation}\label{eq:VaR_def}
\var (Z) = F_Z^{-1}(\alpha) \doteq \inf\left\{z:F_Z(z) \geq \alpha \right\}.
\end{equation}
The $\alpha$-Conditional-Value-at-Risk of $Z$ is denoted by $\cvar(Z)$ and defined as the expectation of the $\alpha$ fraction of the worst outcomes of $Z$
\begin{equation}\label{eq:CVaR_def}
\cvar (Z) = \E \left[ \left. Z \right| Z \leq \var (Z)\right].
\end{equation}

We next present a formula for the sensitivity of $\cvar(Z)$ to changes in $F_Z(z)$.
\subsection{CVaR Gradient of a 1-Dimensional Variable}\label{ssec:1D_formula}

Consider again a random variable $Z$, but now let its probability density function (P.D.F.) $f_Z(z;\theta)$ be parameterized by a vector $\theta \in \R^k$. We let $\var (Z;\theta)$ and $\cvar (Z;\theta)$ denote the VaR and CVaR of $Z$ as defined in Eq. \eqref{eq:VaR_def} and \eqref{eq:CVaR_def}, when the parameter is $\theta$, respectively.

We are interested in the sensitivity of the CVaR to the parameter vector, as expressed by the gradient $\Dth \cvar (Z;\theta)$. In all but the most simple cases, calculating the gradient analytically is intractable. Therefore, we derive a formula in which $\Dth \cvar (Z;\theta)$ is expressed as a conditional expectation, and use it to calculate the gradient by \emph{sampling}.
For technical convenience, we make the following assumption:
\begin{assumption}\label{assumption:bounded_cont_Z}
$Z$ is a continuous random variable, and bounded in $[-b,b]$ for all $\theta$.
\end{assumption}

We also make the following smoothness assumption on $\var (Z;\theta)$ and $\cvar (Z;\theta)$
\begin{assumption}\label{assumption:bounded_derivatives}
For all $\theta$ and $1 \!\leq \!j \!\leq \!k$, the gradients $\Dt{ \var (Z;\theta)}$ and $\Dt{ \cvar (Z;\theta)}$ exist and are bounded.
\end{assumption}
Note that since $Z$ is continuous, Assumption \ref{assumption:bounded_derivatives} is satisfied whenever $\Dth f_Z(z;\theta)$ is bounded. Relaxing Assumptions \ref{assumption:bounded_cont_Z} and \ref{assumption:bounded_derivatives} is possible, but involves technical details that would  complicate the presentation, and is left to future work.
The next assumption is standard in LR gradient estimates
\begin{assumption}\label{assumption:LR_assumption}
For all $\theta$, $z$, and $1 \leq j \leq k$, we have that $\Dt{ f_Z(z;\theta) }/ f_Z(z;\theta)$ exists and is bounded.
\end{assumption}

In the next proposition we present a LR-style sensitivity formula for $\cvar (Z;\theta)$, in which the gradient is expressed as a conditional expectation. In Section \ref{sec:CVaR_Algorithm} we shall use this formula to suggest a sampling algorithm for the gradient.
\begin{proposition}\label{prop:Grad}
Let Assumptions \ref{assumption:bounded_cont_Z}, \ref{assumption:bounded_derivatives}, and \ref{assumption:LR_assumption} hold. Then
\begin{equation*}
\Dt{ \cvar \!(\!Z;\!\theta)} \!=\!
\E^{\theta}\!\left[ \!\left.\Dt{ \log \! f_Z(Z;\theta)}(Z \!-\! \var (Z;\theta)\!)\right|\!Z \!\leq\! \var \!(Z;\!\theta)\! \right]\!\!.
\end{equation*}
\end{proposition}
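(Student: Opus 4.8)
The plan is to work directly from the definition \eqref{eq:CVaR_def}, which for a continuous $Z$ can be written as the truncated integral
\begin{equation*}
\cvar(Z;\theta) = \frac{1}{\alpha}\int_{-b}^{\var(Z;\theta)} z\, f_Z(z;\theta)\,dz ,
\end{equation*}
and to differentiate this expression with respect to $\theta_j$. Two kinds of $\theta$-dependence appear: the integrand $f_Z(z;\theta)$, and the upper limit of integration $\var(Z;\theta)$. Applying the Leibniz integral rule produces a ``bulk'' term $\frac1\alpha\int_{-b}^{\var(Z;\theta)}z\,\Dt{f_Z(z;\theta)}\,dz$ together with a boundary term $\frac1\alpha\, \var(Z;\theta)\, f_Z(\var(Z;\theta);\theta)\,\Dt{\var(Z;\theta)}$. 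The crux of the argument is that the boundary term can be eliminated using the normalization identity that defines the quantile.

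First I would differentiate, with respect to $\theta_j$, the identity $F_Z(\var(Z;\theta);\theta) = \alpha$ --- equivalently $\int_{-b}^{\var(Z;\theta)} f_Z(z;\theta)\,dz = \alpha$ --- which holds for all $\theta$ since $Z$ is continuous. Again by Leibniz this gives $f_Z(\var(Z;\theta);\theta)\,\Dt{\var(Z;\theta)} = -\int_{-b}^{\var(Z;\theta)}\Dt{f_Z(z;\theta)}\,dz$. Substituting this into the boundary term above and combining with the bulk term, the two integrals merge into
\begin{equation*}
\Dt{\cvar(Z;\theta)} = \frac{1}{\alpha}\int_{-b}^{\var(Z;\theta)}\bigl(z - \var(Z;\theta)\bigr)\Dt{f_Z(z;\theta)}\,dz .
\end{equation*}
Finally I would apply the likelihood-ratio identity $\Dt{f_Z(z;\theta)} = f_Z(z;\theta)\Dt{\log f_Z(z;\theta)}$ and recognize $\frac1\alpha f_Z(z;\theta)$, restricted to $z \le \var(Z;\theta)$, as the conditional density of $Z$ given $Z\le\var(Z;\theta)$ (using continuity, so that $\prob(Z\le\var(Z;\theta))=\alpha$), which turns the integral into exactly the stated conditional expectation.

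The main technical obstacle is justifying the two uses of the Leibniz rule, i.e. the interchange of $\Dth$ with the integrals. For the $\theta$-derivative of the integrand this is a standard dominated-convergence argument: Assumption \ref{assumption:LR_assumption} bounds $\Dt{\log f_Z(z;\theta)}$, hence $\left|\Dt{f_Z(z;\theta)}\right| \le C\, f_Z(z;\theta)$, which is integrable uniformly in $\theta$, and Assumption \ref{assumption:bounded_cont_Z} confines everything to the fixed bounded interval $[-b,b]$ where $|z|\le b$. Differentiability of the boundary term requires $\var(Z;\theta)$ to be differentiable in $\theta_j$, which is exactly Assumption \ref{assumption:bounded_derivatives}; note that we never need $f_Z(\var(Z;\theta);\theta)>0$, since only the \emph{product} $f_Z(\var(Z;\theta);\theta)\,\Dt{\var(Z;\theta)}$ enters, and it is supplied directly by the differentiated normalization identity. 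An alternative route, which makes the cancellation of the boundary term automatic through an envelope (Danskin) argument, is to start instead from the Rockafellar--Uryasev variational formula $\cvar(Z;\theta) = \max_{\nu}\bigl\{\nu - \tfrac1\alpha\E^\theta[(\nu - Z)^+]\bigr\}$, whose maximizer is $\var(Z;\theta)$; differentiating under the max and then applying the likelihood-ratio identity to $\E^\theta[(\nu-Z)^+]$ yields the same expression.
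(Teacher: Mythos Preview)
Your proposal is correct and follows essentially the same approach as the paper: differentiate the truncated integral for $\cvar$ via Leibniz, differentiate the normalization identity $\int_{-b}^{\var} f_Z\,dz=\alpha$ via Leibniz, substitute the latter to cancel the boundary term, and finish with the likelihood-ratio trick. Your write-up is in fact more careful than the paper's in justifying the interchange of differentiation and integration, and the Rockafellar--Uryasev alternative you mention is a nice aside not present in the paper.
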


\begin{proof}
Define the level-set $D_{\theta} \!=\! \left\{ \!z\in [-b,b] : z \leq \var\!(Z;\theta) \right\}.$
By definition, $D_\theta \equiv [-b, \var(Z;\theta) ]$, and
$
\int_{z\in D_{\theta}}f_Z\left(z;\theta\right)dz=\alpha.
$
Taking a derivative and using the Leibniz rule we obtain
\begin{equation}\label{eq:proof0}
\begin{split}
0 &= \Dth \int_{-b}^{\var(Z;\theta)}f_Z\left(z;\theta\right)dz \\
 &= \int_{-b}^{\var(Z;\theta)}\Dt{ f_Z\left(z;\theta\right)}dz
 +\Dt{ \var(Z;\theta)} f_Z\left(\var(Z;\theta);\theta\right).
 \end{split}\raisetag{3.4\baselineskip}
\end{equation}
By definition \eqref{eq:CVaR_def} we have
$
\cvar (Z;\theta)=\int_{z\in D_{\theta}}\frac{f_Z\left(z;\theta\right)z}{\alpha}dz
= \alpha^{-1}\int_{-b}^{\var(Z;\theta)}f_Z\left(z;\theta\right)z dz.
$
Now, taking a derivative and using the Leibniz rule we obtain
\begin{equation}\label{eq:proof1}
\begin{split}
\Dth \cvar (Z;\theta)
=& \alpha^{-1} \int_{-b}^{\var(Z;\theta)}\Dt{f_Z\left(z;\theta\right)}z dz \\
&\!+\! \alpha^{-1} \Dt{ \var(Z;\theta)} f_Z \! \left(\var(Z;\theta);\theta\right)\var(Z;\theta).
\end{split}\raisetag{3.4\baselineskip}
\end{equation}

Rearranging, and plugging \eqref{eq:proof0} in \eqref{eq:proof1} we obtain
$
\Dth \cvar (Z;\theta)=
\alpha^{-1} \int_{-b}^{\var(Z;\theta)}\Dt{ f_Z\left(z;\theta\right)}\left(z - \var(Z;\theta)\right)dz.
$
Finally, using the likelihood ratio trick -- multiplying and dividing
by $f_Z\left(z;\theta\right)$ inside the integral, which is justified due to Assumption \ref{assumption:LR_assumption}, we obtain the required expectation.
\end{proof}

Let us contrast the CVaR LR formula of Proposition \ref{prop:Grad} with the standard LR formula for the expectation \cite{glynn1990likelihood} $\Dth \E^{\theta}[Z] = \E^{\theta}\left[\Dt{ \log f_Z(Z;\theta)}(Z-b)\right]$, where the baseline $b$ could be any arbitrary constant. Note that in the CVaR case the baseline is \emph{specific}, and, as seen in the proof, accounts for the sensitivity of the level-set $D_{\theta}$.
Quite surprisingly, this specific baseline turns out to be exactly the VaR, $\var(Z;\theta)$, which, as we shall see later, also leads to an elegant sampling based estimator.

In a typical application, $Z$ would correspond to the performance of some system, such as the profit in portfolio optimization, or the total reward in RL. Note that in order to use Proposition \ref{prop:Grad} in a gradient estimation algorithm, one needs access to $\Dth \log f_Z(Z;\theta)$: the sensitivity of the performance distribution to the parameters. Typically, the system performance is a complicated function of a high-dimensional random variable. For example, in RL and queueing systems, the performance is a function of a trajectory from a stochastic dynamical system, and calculating its probability distribution is usually intractable. The sensitivity of the trajectory distribution to the parameters, however, is often easy to calculate, since the parameters typically control how the trajectory is generated. We shall now generalize Proposition \ref{prop:Grad} to such cases. The utility of this generalization is further exemplified in Section \ref{sec:RL}, for the RL domain.

\subsection{CVaR Gradient Formula -- General Case}\label{ssec:multi-D_formula}

Let $\bX=(X_1,X_2,\dots,X_n)$ denote an $n-$dimensional random variable with a finite support $[-b,b]^n$, and let $Y$ denote a discrete random variable taking values in some countable set $\cY$. Let $f_{Y}(y;\theta)$ denote the probability mass function of $Y$, and let $f_{\bX|Y}(\bx|y;\theta)$ denote the probability density function of $\bX$ given $Y$. Let the reward function $r$ be a bounded mapping from $[-b,b]^n \times \cY$ to $\R$, and consider the random variable $R \doteq r(\bX,Y)$. We are interested in a formula for $\Dth \cvar (R;\theta)$.

We make the following assumption, similar to Assumptions \ref{assumption:bounded_cont_Z}, \ref{assumption:bounded_derivatives}, and \ref{assumption:LR_assumption}.
\begin{assumption}\label{assumption:bounded_cont_rZ}
The reward $R$ is a continuous random variable for all $\theta$. Furthermore, for all $\theta$ and $1 \leq j \leq k$, the gradients $\Dth \var (R;\theta)$ and $\Dth \cvar (R;\theta)$ are well defined and bounded. In addition $\Dt{ \log f_{\bX|Y}\left(\bx|y;\theta\right)}$ and $\Dt{ \log f_{Y}\left(y;\theta\right)}$ exist and are bounded for all $\bx$, $y$, and $\theta$.
\end{assumption}

Define the level-set
$
\cD_{y;\theta} = \left\{ \bx \in [-b,b]^n : r(\bx,y)\leq \var(R;\theta) \right\}.
$
We require some smoothness of the function $r$, that is captured by the following assumption on $\cD_{y;\theta}$.
\begin{assumption}\label{assumption:D_finite_sum}
For all $y$ and $\theta$, the set $\cD_{\!y;\theta}$ may be written as a finite sum of $L_{y;\theta}$ disjoint, closed, and connected components $D_{\!y;\theta}^i$, each with positive measure:
$
\cD_{\!y;\theta} \!=\! \sum_{i=1}^{L_{y;\theta}} D_{\!y;\theta}^i.
$
\end{assumption}
Assumption \ref{assumption:D_finite_sum} may satisfied, for example, when $r(\bx,y)$ is Lipschitz in $\bx$ for all $y\in \cY$.
We now present a sensitivity formula for $\cvar (R;\theta)$.
\begin{proposition}\label{prop:grad_general}
Let Assumption \ref{assumption:bounded_cont_rZ} and \ref{assumption:D_finite_sum} hold. Then
\begin{equation*}
\begin{split}
&\Dth \cvar (R;\theta) = \E^{\theta}\left[\left(\frac{\partial {\log f_Y(Y;\theta)}}{\partial \theta_j}+ \right.\right.\\
&\left.\left.\left. \frac{\partial {\log f_{\bX|Y}(\bX|Y;\theta)}}{\partial \theta_j}\right)\left(R - \var (R;\theta)\right)\right| R  \leq  \var (R;\theta)\right].
\end{split}
\end{equation*}
\end{proposition}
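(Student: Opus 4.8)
The plan is to follow the route of the proof of Proposition~\ref{prop:Grad}, replacing the one-dimensional Leibniz rule by its multivariate analogue applied componentwise to the level set, and carrying the discrete variable $Y$ as an outer sum. First I would write, conditioning on $Y$, the identity coming from $\prob(R\le\var(R;\theta))=\alpha$,
\[
\sum_{y\in\cY} f_Y(y;\theta)\int_{\cD_{y;\theta}} f_{\bX|Y}(\bx|y;\theta)\,d\bx = \alpha,
\]
and the one coming from the definition of the CVaR,
\[
\alpha\,\cvar(R;\theta) = \sum_{y\in\cY} f_Y(y;\theta)\int_{\cD_{y;\theta}} f_{\bX|Y}(\bx|y;\theta)\,r(\bx,y)\,d\bx ,
\]
and then differentiate both with respect to $\theta_j$. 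By Assumption~\ref{assumption:bounded_cont_rZ}, $R$ is bounded and the log-derivatives $\Dth\log f_Y$ and $\Dth\log f_{\bX|Y}$ are bounded, so dominated convergence lets me interchange $\Dth$ with the countable sum over $\cY$; the only delicate term is the derivative of the integral over the $\theta$-dependent domain $\cD_{y;\theta}$.

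For that term, Assumption~\ref{assumption:D_finite_sum} is what makes the argument work: decomposing $\cD_{y;\theta}=\sum_{i=1}^{L_{y;\theta}} D_{y;\theta}^i$ and applying a Reynolds-transport (divergence-theorem) form of the Leibniz rule on each connected closed component gives, for a smooth integrand $g$,
\[
\Dth \int_{\cD_{y;\theta}} g(\bx)\,d\bx = \int_{\cD_{y;\theta}} \Dth g(\bx)\,d\bx + \int_{\partial \cD_{y;\theta}} g(\bx)\,v(\bx)\,dS(\bx),
\]
where $v(\bx)$ is the normal velocity of the moving boundary induced by $\theta_j$. I would apply this with $g=f_{\bX|Y}(\cdot|y;\theta)$ for the first identity and with $g=f_{\bX|Y}(\cdot|y;\theta)\,r(\cdot,y)$ for the second. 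The key observation is that the moving portion of $\partial\cD_{y;\theta}$ is contained in the level set $\{\bx : r(\bx,y)=\var(R;\theta)\}$, while any portion lying in $\partial([-b,b]^n)$ has zero normal velocity; hence on the support of the boundary integrals $r(\bx,y)=\var(R;\theta)$, so the boundary term arising for the CVaR equals exactly $\var(R;\theta)$ times the one arising for the VaR.

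The rest is bookkeeping. Subtracting $\var(R;\theta)$ times the differentiated VaR identity (whose value is $0$) from the differentiated CVaR identity cancels the boundary integrals, leaving
\[
\alpha\,\Dth\cvar(R;\theta) = \sum_{y\in\cY}\int_{\cD_{y;\theta}} \Dth\bigl(f_Y(y;\theta) f_{\bX|Y}(\bx|y;\theta)\bigr)\bigl(r(\bx,y)-\var(R;\theta)\bigr)\,d\bx .
\]
The likelihood-ratio trick then rewrites $\Dth(f_Y f_{\bX|Y}) = f_Y f_{\bX|Y}\bigl(\Dth\log f_Y + \Dth\log f_{\bX|Y}\bigr)$ (valid by Assumption~\ref{assumption:bounded_cont_rZ}), and since $\alpha^{-1}f_Y(y;\theta)f_{\bX|Y}(\bx|y;\theta)$ restricted to $\bx\in\cD_{y;\theta}$ --- i.e.\ to $\{r(\bx,y)\le\var(R;\theta)\}$ --- is exactly the joint density of $(\bX,Y)$ conditioned on $\{R\le\var(R;\theta)\}$, dividing by $\alpha$ gives the stated conditional expectation.

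I expect the main obstacle to be the rigorous justification of the multivariate Leibniz/transport formula and the identification of the boundary-velocity term, since $\cD_{y;\theta}$ may have intricate geometry; Assumption~\ref{assumption:D_finite_sum} is precisely what reduces this to a finite family of well-behaved moving regions (e.g.\ when $r(\cdot,y)$ is Lipschitz), and the smoothness of $\var(R;\theta)$ in Assumption~\ref{assumption:bounded_cont_rZ} is what lets the level sets deform smoothly in $\theta_j$. The secondary point --- interchanging the countable sum over $\cY$ with $\Dth$, and differentiation under the integral over the $\theta$-independent parts --- follows from the uniform boundedness hypotheses via dominated convergence.
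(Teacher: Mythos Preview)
Your proposal is correct and follows essentially the same route as the paper: writing the two identities for $\alpha$ and $\alpha\cvar$, differentiating each via a multivariate Leibniz/transport rule on the $\theta$-dependent level sets, splitting the boundary into the piece where $r=\var$ and the piece on $\partial([-b,b]^n)$ (which has zero normal velocity), and then eliminating the boundary contributions by subtracting $\var$ times the first differentiated identity from the second before applying the likelihood-ratio trick. The only cosmetic difference is language --- the paper phrases the transport step using Flanders' differential-forms version of Leibniz (interior products $\mathbf v\righthalfcup\omega$) where you use the classical Reynolds/divergence-theorem formulation with normal boundary velocity --- but the structure, the key geometric observation about $\partial\cD_{y;\theta}$, and the cancellation mechanism are identical.
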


The proof of Proposition \ref{prop:grad_general} is similar in spirit to the proof of Proposition \ref{prop:Grad}, but involves some additional difficulties of applying the Leibnitz rule in a multidimensional setting. It is given in \cite{tamar2014cvar}. We reiterate that relaxing Assumptions 4 and 5 is possible, but is technically involved, and left for future work. In the next section we show that the formula in Proposition \ref{prop:grad_general} leads to an effective algorithm for estimating $\Dth \cvar (R;\theta)$ by sampling.
\section{A CVaR Gradient Estimation Algorithm}\label{sec:CVaR_Algorithm}

The sensitivity formula in Proposition \ref{prop:grad_general} suggests a natural Monte--Carlo (MC) estimation algorithm. The method, which we label \texttt{GCVaR} (Gradient estimator for CVaR), is described as follows. Let $\bx_{1},y_1\dots,\bx_{N},y_N$ be $N$ samples drawn i.i.d. from $f_{\bX,Y}(\bx,y;\theta)$, the joint distribution of $\bX$ and $Y$.
We first estimate $\var(R;\theta)$ using the empirical $\alpha$-quantile\footnote{Algorithmically, this is equivalent to first sorting the $r(\bx_i,y_i)$'s in ascending order, and then selecting $\evar$ as the $\lceil \alpha N \rceil$ term in the sorted list.} $\evar$
\begin{equation}\label{eq:empirical_VaR}
\evar = \inf_z \hat{F}(z) \geq \alpha,
\end{equation}
where $\hat{F}(z)$ is the empirical C.D.F. of $R$:
$
\hat{F}(z) \doteq \frac{1}{N} \sum_{i=1}^{N} \mathbf{1}_{r(\bx_{i},y_i)\leq z}.
$
The MC estimate of the gradient $\dcvar \approx \Dth\cvar (R;\theta)$ is given by
\begin{equation}\label{eq:simple_alg}
\begin{split}
\dcvar =& \frac{1}{\alpha N}\!\sum_{i=1}^{N} \!\left(\Dt{ \log f_{Y}\left(y_i;\theta\right)} \!+\! \Dt{ \log f_{\bX|Y}\left(\bx_{i}|y_i;\theta\right)}\right) \!\!\times\\
& \times\! \left( r(\bx_{i},y_i)-\evar \right) \mathbf{1}_{r(\bx_{i},y_i)\leq \evar}.
\end{split}\raisetag{0.8\baselineskip}
\end{equation}

\begin{algorithm} \label{alg:GCVaR}
\caption{\texttt{GCVaR}}
1: \alggiven
\begin{itemize}
\item CVaR level $\alpha$
\item A reward function $r(\bx,y):\R^n\times \cY \to \R$
\item Derivatives $\Dt{}$ of the probability mass function $f_Y\left(y;\theta\right)$ and probability density function $f_{\bX|Y}\left(\bx|y;\theta\right)$
\item An i.i.d. sequence $\bx_{1},y_1,\dots,\bx_{N},y_N \sim f_{\bX,Y}(\bx,y;\theta)$.
\end{itemize}
2: Set $r_1^s,\dots,r_N^s = \textrm{Sort}\left(r(\bx_{1},y_1),\dots,r(\bx_{N},y_N)\right)$

3: Set $\evar = r_{\lceil \alpha N \rceil}^s$

4: For $j = 1,\dots,k$ do
\begin{equation*}
\begin{split}
\dcvar =& \frac{1}{\alpha N}\! \sum_{i=1}^{N} \! \left(\Dt{ \log f_{Y}\left(y_i;\theta\right)} \!+\! \Dt{ \log f_{\bX|Y}\left(\bx_{i}|y_i;\theta\right)}\right)\!\! \times\\
& \times \! \left( r(\bx_{i},y_i)-\evar \right) \mathbf{1}_{r(\bx_{i},y_i)\leq \evar}
\end{split}
\end{equation*}
5: \algreturn $\Delta_{1;N},\dots,\Delta_{k;N}$
\end{algorithm}

It is known that the empirical $\alpha$-quantile is a biased estimator of $\var(R;\theta)$. Therefore, $\dcvar$ is also a biased estimator of $\Dth\cvar (R;\theta)$. In the following we analyze and bound this bias.
We first show that $\dcvar$ is a consistent estimator. The proof is similar to the proof of Theorem 4.1 in \cite{hong_simulating_2009}, and given in the supplementary material.

\begin{theorem}\label{thm:consistent}
Let Assumption \ref{assumption:bounded_cont_rZ} and \ref{assumption:D_finite_sum} hold. Then $\dcvar \to \Dth\cvar (R;\theta)$ w.p. 1 as $N \to \infty$.
\end{theorem}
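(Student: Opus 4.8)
The plan is to adapt the argument of \citet{hong_simulating_2009} (their Theorem~4.1) to our likelihood-ratio summands. Abbreviate $w_i \doteq \Dt{\log f_{Y}\left(y_i;\theta\right)} + \Dt{\log f_{\bX|Y}\left(\bx_i|y_i;\theta\right)}$ and $r_i \doteq r(\bx_i,y_i)$, so that $\dcvar = \tfrac{1}{\alpha N}\sum_{i=1}^{N} w_i\,(r_i - \evar)\,\mathbf{1}_{r_i \leq \evar}$; by Assumption~\ref{assumption:bounded_cont_rZ} the $w_i$ are uniformly bounded and the $r_i$ lie in $[-b,b]$, so every summand is bounded. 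The first step is to establish strong consistency of the empirical quantile, $\evar \to \var(R;\theta)$ w.p.~1: since $R$ is a continuous random variable, $F_R$ is continuous, Glivenko--Cantelli gives $\sup_z|\hat F(z) - F_R(z)| \to 0$ a.s., and a standard order-statistics argument then yields a.s.\ convergence of the $\lceil\alpha N\rceil$-th order statistic to $\var(R;\theta)$ (cf.\ \cite{hong_simulating_2009}).

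Next I would introduce the ``oracle'' estimator $\hat{\Delta}_{j;N} \doteq \tfrac{1}{\alpha N}\sum_{i=1}^{N} w_i\,(r_i - \var(R;\theta))\,\mathbf{1}_{r_i \leq \var(R;\theta)}$, obtained by replacing $\evar$ with the true quantile. Its summands are i.i.d.\ and bounded, hence integrable, with common mean $\tfrac{1}{\alpha}\E^{\theta}[w(R-\var(R;\theta))\mathbf{1}_{R\leq\var(R;\theta)}] = \E^{\theta}[w(R-\var(R;\theta))\mid R\leq\var(R;\theta)]$, which is exactly $\Dth\cvar(R;\theta)$ by Proposition~\ref{prop:grad_general}. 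The strong law of large numbers then gives $\hat{\Delta}_{j;N} \to \Dth\cvar(R;\theta)$ w.p.~1.

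It then remains to show $\dcvar - \hat{\Delta}_{j;N} \to 0$ w.p.~1, which I would do pathwise. Writing $\var \doteq \var(R;\theta)$ and using
\begin{equation*}
(r_i - \evar)\mathbf{1}_{r_i\leq\evar} - (r_i - \var)\mathbf{1}_{r_i\leq\var}
= (\var - \evar)\mathbf{1}_{r_i\leq\evar} + (r_i - \var)\left(\mathbf{1}_{r_i\leq\evar} - \mathbf{1}_{r_i\leq\var}\right),
\end{equation*}
the first term on the right contributes at most $|\var - \evar|\cdot\tfrac{1}{\alpha N}\sum_i|w_i|$, and the second is nonzero only when $r_i$ lies between $\evar$ and $\var$, where $|r_i - \var|\leq|\evar - \var|$, so it also contributes at most $|\var - \evar|\cdot\tfrac{1}{\alpha N}\sum_i|w_i|$. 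Since $\tfrac1N\sum_i|w_i|\to\E^{\theta}|w|<\infty$ w.p.~1 by the SLLN (boundedness of $w$), while $|\evar - \var|\to 0$ w.p.~1 by the first step, we get $|\dcvar - \hat{\Delta}_{j;N}|\to 0$ w.p.~1, and combined with the SLLN for the oracle estimator this yields $\dcvar \to \Dth\cvar(R;\theta)$ w.p.~1.

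The main obstacle is the last step: because $\evar$ is a function of the entire sample, $\hat{\Delta}_{j;N}$ and the correction term are not independent, so one cannot argue in expectation. The remedy is to restrict to the single probability-one event on which $\evar\to\var$, $\tfrac1N\sum_i|w_i|$ converges (hence stays bounded), and $\hat{\Delta}_{j;N}$ converges all hold simultaneously --- a finite intersection of probability-one events --- after which the pathwise bound above suffices. A minor side point is that the quantile-consistency step needs $F_R$ to be strictly increasing at $\var(R;\theta)$ (so that $\var(R;\theta)$ is the unique solution of $F_R(z)=\alpha$); this is a mild regularity condition, consistent with the positive-measure level-set structure of Assumption~\ref{assumption:D_finite_sum}.
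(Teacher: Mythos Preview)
Your proposal is correct and follows the same overall skeleton as the paper: introduce the oracle estimator with the true quantile, handle it by the SLLN together with Proposition~\ref{prop:grad_general}, and then show the correction vanishes pathwise using strong consistency of $\evar$. The decomposition you write for $(r_i-\evar)\mathbf{1}_{r_i\le\evar}-(r_i-\var)\mathbf{1}_{r_i\le\var}$ is exactly the one the paper uses (in the paper's notation, the two pieces are the ``$(\nu-\evar)$'' term and the ``$D(\bx_i,y_i)(\mathbf{1}_{r_i\le\evar}-\mathbf{1}_{r_i\le\nu})$'' term).

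Where you differ is in how you dispatch the indicator-difference piece. The paper bounds $D(\bx_i,y_i)=w_i(r_i-\var)$ crudely (it is bounded by Assumption~\ref{assumption:bounded_cont_rZ}), applies H\"older, and then invokes Proposition~4.1 of \citet{hong_simulating_2009} to conclude that $\tfrac{1}{N}\sum_i|\mathbf{1}_{r_i\le\evar}-\mathbf{1}_{r_i\le\var}|\to 0$. You instead exploit the specific structure of the summand: on the support of $\mathbf{1}_{r_i\le\evar}-\mathbf{1}_{r_i\le\var}$ one has $|r_i-\var|\le|\evar-\var|$, so the whole term is bounded by $|\evar-\var|\cdot\tfrac{1}{\alpha N}\sum_i|w_i|$. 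This is strictly more elementary---it removes the need for H\"older and for the external Hong--Liu lemma---and it makes the proof self-contained modulo standard quantile consistency. Your caveat about needing $F_R$ to be strictly increasing at $\var(R;\theta)$ is well taken; the paper leaves this implicit when it cites \cite{david1981order} for $\evar\to\nu$ (and makes it explicit only later, in Assumption~\ref{assumption:nice_f_g}, for the bias bound).
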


With an additional smoothness assumption we can explicitly bound the bias.
Let $f_R(\cdot;\theta)$ denote the P.D.F. of $R$, and define the function
$
g(\beta;\theta) \doteq \E^\theta \!\!\left[ \!\left. \left(\frac{\partial {\log f_Y(Y;\theta)}}{\partial \theta_j}\!+\!\frac{\partial {\log f_{\bX|Y}(\bX|Y;\theta)}}{\partial \theta_j}\right)\!\!(R \!-\! \var (R;\theta)\!) \right| \! R=\beta\right]\!\!.
$
\begin{assumption}\label{assumption:nice_f_g}
For all $\theta$, $f_R(\cdot;\theta)$ and $g(\cdot;\theta)$ are continuous at $\var (R;\theta)$, and $f_R(\var (R;\theta);\theta) > 0$.
\end{assumption}

Assumption \ref{assumption:nice_f_g} is similar to Assumption 4 of \cite{hong_simulating_2009}, and may be satisfied, for example, when $\Dt{ \log f_{\bX|Y}\left(\bx|y;\theta\right)}$ is continuous and $r(\bx,y)$ is Lipschitz in $\bx$.
The next theorem shows that the bias is $\mathcal{O}(N^{-1/2})$. The proof, given in the supplementary material, is based on separating the bias to a term that is bounded using a result of \citet{hong_simulating_2009}, and an additional term that we bound using well-known results for the bias of empirical quantiles.
\begin{theorem}\label{thm:bias_bound}
Let Assumptions \ref{assumption:bounded_cont_rZ}, \ref{assumption:D_finite_sum}, and \ref{assumption:nice_f_g} hold. Then $\E \left[ \dcvar \right] - \Dth\cvar (R;\theta)$ is $O(N^{-1/2})$.
\end{theorem}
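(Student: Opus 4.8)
The estimator $\dcvar$ in \eqref{eq:simple_alg} differs from the ideal quantity by two sources of randomness: the empirical average is built over the sampled trajectories, and the empirical quantile $\evar$ replaces the true $\var(R;\theta)$. The plan is to introduce an intermediate estimator $\hat\Delta_{j;N}$ that uses the \emph{true} VaR $\nu_\alpha(R;\theta)$ in place of $\evar$, namely
\begin{equation*}
\hat\Delta_{j;N} = \frac{1}{\alpha N}\sum_{i=1}^N \left(\Dt{\log f_Y(y_i;\theta)} + \Dt{\log f_{\bX|Y}(\bx_i|y_i;\theta)}\right)\left(r(\bx_i,y_i) - \var(R;\theta)\right)\mathbf{1}_{r(\bx_i,y_i)\le \var(R;\theta)},
\end{equation*}
and then split the bias as
\begin{equation*}
\E[\dcvar] - \Dth\cvar(R;\theta) = \Bigl(\E[\dcvar] - \E[\hat\Delta_{j;N}]\Bigr) + \Bigl(\E[\hat\Delta_{j;N}] - \Dth\cvar(R;\theta)\Bigr).
\end{equation*}
The second bracket is exactly zero: by Proposition \ref{prop:grad_general}, $\E^\theta$ of the summand conditioned on $R\le\var(R;\theta)$ equals $\Dth\cvar(R;\theta)$, and since $\prob(R\le\var(R;\theta))=\alpha$ (continuity of $R$), the unconditional expectation of each i.i.d. term equals $\alpha\cdot\Dth\cvar(R;\theta)$, so $\hat\Delta_{j;N}$ is \emph{unbiased}. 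Hence the whole bias lives in the first bracket, which measures the effect of replacing $\var(R;\theta)$ by $\evar$.

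For the first bracket I would follow the decomposition used by \citet{hong_simulating_2009} for the CVaR estimator itself. Write $h(\bx,y) \doteq \Dt{\log f_Y(y;\theta)} + \Dt{\log f_{\bX|Y}(\bx|y;\theta)}$, which is bounded by Assumption \ref{assumption:bounded_cont_rZ}, say $|h|\le M$. Then
\begin{equation*}
\dcvar - \hat\Delta_{j;N} = \frac{1}{\alpha N}\sum_{i=1}^N h(\bx_i,y_i)\Bigl[(r_i - \evar)\mathbf{1}_{r_i\le\evar} - (r_i - \var(R;\theta))\mathbf{1}_{r_i\le\var(R;\theta)}\Bigr],
\end{equation*}
writing $r_i = r(\bx_i,y_i)$. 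The bracketed difference is nonzero only for samples with $r_i$ between $\evar$ and $\var(R;\theta)$, and on that event it is bounded in magnitude by $|\evar - \var(R;\theta)|$ plus the count of such samples times the same gap; more carefully, $(r-\evar)\mathbf{1}_{r\le\evar} - (r-\nu)\mathbf{1}_{r\le\nu}$ is bounded by $|\evar-\nu|$ uniformly and supported on $\{r \in [\min(\evar,\nu),\max(\evar,\nu)]\}$. Taking expectations, one gets $\bigl|\E[\dcvar] - \E[\hat\Delta_{j;N}]\bigr| \le \frac{M}{\alpha}\,\E\bigl[|\evar - \var(R;\theta)|\cdot(\text{indicator-type factor})\bigr]$, and the standard bound on the bias and mean absolute deviation of the empirical $\alpha$-quantile — valid under Assumption \ref{assumption:nice_f_g}, since $f_R(\var(R;\theta);\theta)>0$ and $f_R, g$ are continuous there — gives $\E|\evar - \var(R;\theta)| = O(N^{-1/2})$. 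Combining, the first bracket is $O(N^{-1/2})$, and since the second is $0$, the theorem follows.

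I would actually handle the first bracket in two pieces, mirroring the cited proof structure: a "linear" part, $\frac{1}{\alpha N}\sum_i h(\bx_i,y_i)(\var(R;\theta)-\evar)\mathbf{1}_{r_i\le\evar}$, whose expectation is controlled by writing $\frac{1}{N}\sum_i \mathbf{1}_{r_i\le\evar} = \lceil\alpha N\rceil/N = \alpha + O(N^{-1})$ deterministically and bounding $\E|\var(R;\theta)-\evar|$; and a "switching" part coming from the change of indicator region, $\frac{1}{\alpha N}\sum_i h(\bx_i,y_i)(r_i - \var(R;\theta))(\mathbf{1}_{r_i\le\evar}-\mathbf{1}_{r_i\le\var(R;\theta)})$, where on the relevant event $|r_i - \var(R;\theta)|\le|\evar-\var(R;\theta)|$ so the term is again $O(N^{-1/2})$ in expectation after using the quantile deviation bound together with continuity of $f_R$ near $\var(R;\theta)$ to control the probability of a sample falling in the sliver between $\evar$ and $\var(R;\theta)$. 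The main obstacle is this switching term: one has to be careful that $\evar$ is itself a random function of all the samples, so the events $\{r_i \le \evar\}$ are not independent across $i$; the clean way around it is to condition on the order statistic defining $\evar$, or equivalently to bound everything by the deterministic quantity $|\evar - \var(R;\theta)|$ times a uniformly bounded factor and invoke the $O(N^{-1/2})$ bound on $\E|\evar-\var(R;\theta)|$ directly — exactly the device \citet{hong_simulating_2009} use, which Assumption \ref{assumption:nice_f_g} is tailored to support.
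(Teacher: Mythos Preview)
Your approach is correct and follows essentially the same decomposition as the paper's proof. Both arguments use the algebraic identity
\[
(r-\evar)\mathbf{1}_{r\le\evar} - (r-\nu)\mathbf{1}_{r\le\nu} = (r-\nu)\bigl(\mathbf{1}_{r\le\evar}-\mathbf{1}_{r\le\nu}\bigr) + (\nu-\evar)\mathbf{1}_{r\le\evar},
\]
and both bound the ``linear'' piece $(\nu-\evar)\frac{1}{\alpha N}\sum_i h(\bx_i,y_i)\mathbf{1}_{r_i\le\evar}$ via $\E|\evar-\nu|=O(N^{-1/2})$, proved through the Bahadur-type representation of the empirical quantile under Assumption~\ref{assumption:nice_f_g}.

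The one substantive difference is in the ``switching'' piece. The paper does \emph{not} separate the unbiased term $\hat\Delta_{j;N}$ from the switching term; instead it bundles them into $\bar\Delta_N=\frac{1}{\alpha N}\sum_i D(\bx_i,y_i)\mathbf{1}_{r_i\le\evar}$ and invokes Theorem~4.2 of \citet{hong_simulating_2009} as a black box to get $\E[\bar\Delta_N]-\Dth\cvar(R;\theta)=o(N^{-1/2})$ (noting that their proof goes through with the likelihood-ratio $D$ in place of their perturbation-analysis integrand). Your route is more elementary: you isolate $\hat\Delta_{j;N}$ as exactly unbiased, then bound the switching term directly using $|r_i-\nu|\le|\evar-\nu|$ on its support, which gives $O(N^{-1/2})$ without appealing to Hong--Liu. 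This buys self-containment at the cost of the slightly sharper $o(N^{-1/2})$ that Hong--Liu deliver for that piece; since the theorem only claims $O(N^{-1/2})$, nothing is lost.

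One small slip to fix: the claim that $(r-\evar)\mathbf{1}_{r\le\evar}-(r-\nu)\mathbf{1}_{r\le\nu}$ is ``supported on $\{r\in[\min(\evar,\nu),\max(\evar,\nu)]\}$'' is false---for $r<\min(\evar,\nu)$ it equals $\nu-\evar\neq 0$. This does not affect your argument, since your subsequent linear/switching split handles that region correctly via the linear piece; just delete the support remark.
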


At this point, let us again contrast \texttt{GCVaR} with the standard LR method. One may naively presume that applying a standard LR gradient estimator to the $\alpha \%$ worst samples would work as a CVaR gradient estimator. This corresponds to applying the \texttt{GCVaR} algorithm without subtracting the $\evar$ baseline from the reward in \eqref{eq:simple_alg}. Theorems \ref{thm:consistent} and \ref{thm:bias_bound} show that such an estimator \emph{would not be consistent}. In fact, in the supplementary material we give an example where the gradient error of such an approach may be arbitrarily large.

In the sequel, we use \texttt{GCVaR} as part of a stochastic gradient descent algorithm
for CVaR optimization. An asymptotically decreasing gradient bias, as may be established from Theorem \ref{thm:consistent}, is necessary to guarantee convergence of such a procedure. Furthermore, the bound of Theorem \ref{thm:bias_bound} will allow us to quantify how many samples are needed at each iteration for such convergence to hold.

\subsection*{Variance Reduction by Importance Sampling}\label{ssec:IS}
For very low quantiles, i.e., $\alpha$ close to $0$, the \texttt{GCVaR} estimator would suffer from a high variance, since the averaging is effectively only over $\alpha N$ samples. This is a well-known issue in sampling based approaches to VaR and CVaR estimation, and is often mitigated using variance reduction techniques such as
Importance Sampling (IS; \citealt{rubinstein2011simulation}; \citealt{bardou2009computing}). In IS, the variance of a MC estimator is reduced by using samples from a \emph{different} sampling distribution, and suitably modifying the estimator to keep it unbiased. It is straightforward to incorporate IS into LR gradient estimators in general, and to our \texttt{GCVaR} estimator in particular. Due to space constraints, and since this is fairly standard textbook material (e.g., \citealt{rubinstein2011simulation}), we provide the full technical details in the supplementary material. In our empirical results we show that using IS indeed leads to significantly better performance.

\section{CVaR Optimization}\label{sec:CVaR_OPT}
In this section, we consider the setting of Section \ref{ssec:multi-D_formula}, and aim to solve the CVaR optimization problem:
\begin{equation}\label{eq:CVaR_optimization}
\max_{\theta \in \R^{k}} \cvar (R;\theta).
\end{equation}
For this goal we propose \texttt{CVaRSGD}: a stochastic gradient descent algorithm, based on the \texttt{GCVaR} gradient estimator. We now describe the \texttt{CVaRSGD} algorithm in detail, and show that it converges to a local optimum of \eqref{eq:CVaR_optimization}.

In \texttt{CVaRSGD}, we start with an arbitrary initial parameter $\theta^0\in \R^{k}$. The algorithm proceeds iteratively as follows. At each iteration $i$ of the algorithm, we first sample $n_i$ i.i.d. realizations $x_1,y_1,\dots,x_{n_i},y_{n_i}$ of the random variables $\bX$ and $Y$, from the distribution $f_{\bX,Y}(\bx,y;\theta^i)$. We then apply the \texttt{GCVaR} algorithm to obtain an estimate $\Delta_{j;n_i}$ of $\Dth \cvar (R;\theta^i)$, using the samples $x_1,y_1,\dots,x_{n_i},y_{n_i}$. Finally, we update the parameter according to
\begin{equation}\label{eq:theta_update_SGD}
\theta_{j}^{i+1} = \Gamma \left( \theta_{j}^{i} + \epsilon_i \Delta_{j;n_i} \right),
\end{equation}
where $\epsilon_i$ is a positive step size, and $\Gamma: \R^k \to \R^k$ is a projection to some compact set $\Theta$ with a smooth boundary.
The purpose of the projection is to facilitate convergence of the algorithm, by guaranteeing that the iterates remain bounded (this is a common stochastic approximation technique; \citealt{kushner2003stochastic}). In practice, if $\Theta$ is chosen large enough so that it contains the local optima of $\cvar (R;\theta)$, the projection would rarely occur, and would have a negligible effect on the algorithm.
Let $\hat{\Gamma}_{\theta}(\nu)\doteq \lim_{\delta\to 0}\frac{\Gamma(\theta + \delta \nu) - \theta}{\delta}$ denote an operator that, given a direction of change $\nu$ to the parameter $\theta$, returns a modified direction that keeps $\theta$ within $\Theta$. Consider the following ordinary differential equation:
\begin{equation}\label{eq:ODE_SGD}
    \dot{\theta} = \hat{\Gamma}_{\theta}\left( \nabla \cvar (R;\theta) \right), \quad \theta(0) \in \Theta.
\end{equation}
Let $\cK$ denote the set of all asymptotically stable equilibria of \eqref{eq:ODE_SGD}. The next theorem shows that under suitable technical conditions, the \texttt{CVaRSGD} algorithm converges to $\cK$ almost surely. The theorem is a direct application of Theorem 5.2.1 of \citet{kushner2003stochastic}, and given here without proof.

\begin{theorem}\label{thm:CVaRPG_convergence}
Consider the \texttt{CVaRSGD} algorithm \eqref{eq:theta_update_SGD}. Let Assumptions \ref{assumption:bounded_cont_rZ}, \ref{assumption:D_finite_sum}, and \ref{assumption:nice_f_g} hold, and assume that $\cvar (R;\theta)$ is continuously differentiable in $\theta$. Also, assume that $\sum_{i=1}^\infty \epsilon_i = \infty$, $\sum_{i=1}^\infty \epsilon_i^2 < \infty$, and that $\sum_{i=1}^\infty \epsilon_i \left| \E \left[ \Delta_{j;n_i} \right] - \Dth\cvar (R;\theta^i)\right|< \infty$ w.p. 1 for all $j$. Then $\theta^i \to \cK$ almost surely.
\end{theorem}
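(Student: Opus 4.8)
The plan is to verify the hypotheses of Theorem 5.2.1 of \citet{kushner2003stochastic}, the Kushner--Clark ODE method for constrained stochastic approximation, and then invoke it directly. Rewrite the update \eqref{eq:theta_update_SGD} in the canonical form $\theta^{i+1} = \Gamma\bigl(\theta^i + \epsilon_i \bigl(\nabla\cvar(R;\theta^i) + \beta_i + \xi_i\bigr)\bigr)$, where $\beta_i \doteq \E[\Delta_{j;n_i}\mid\theta^i] - \Dth\cvar(R;\theta^i)$ is the \emph{bias} term and $\xi_i \doteq \Delta_{j;n_i} - \E[\Delta_{j;n_i}\mid\theta^i]$ is the zero-mean \emph{noise} term. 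The standard conditions to check are: (i) the mean vector field $\nabla\cvar(R;\theta)$ is continuous on $\Theta$; (ii) the step sizes satisfy $\sum_i\epsilon_i=\infty$, $\sum_i\epsilon_i^2<\infty$; (iii) the bias is asymptotically negligible in the sense $\sum_i \epsilon_i\lvert\beta_i\rvert < \infty$ w.p.\ 1; (iv) the noise sequence satisfies a Kushner--Clark-type condition, e.g.\ $\lim_n \sup_{m\geq n}\bigl\lvert\sum_{i=n}^{m}\epsilon_i\xi_i\bigr\rvert = 0$ w.p.\ 1; and (v) the iterates stay in the compact set $\Theta$, which is automatic because of the projection $\Gamma$.

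The steps, in order: First, note (i) holds by the hypothesis that $\cvar(R;\theta)$ is continuously differentiable. Second, (ii) and (iii) are assumed outright in the statement. Third, for (iv), I would observe that $\Delta_{j;n_i}$ is bounded: by Assumptions \ref{assumption:bounded_cont_rZ} and \ref{assumption:D_finite_sum}, the reward $R$ is bounded in $[-b,b]$, $\evar \in [-b,b]$, and the log-likelihood derivatives $\Dt{\log f_Y}$ and $\Dt{\log f_{\bX|Y}}$ are bounded; hence the summand in \eqref{eq:simple_alg} is bounded and so is the average, uniformly in $N$ and $\theta \in \Theta$. Since the $\xi_i$ are bounded martingale differences (independent samples are drawn afresh each iteration, so $\E[\xi_i\mid\mathcal{F}_{i-1}]=0$) with $\sum_i\epsilon_i^2\E[\lVert\xi_i\rVert^2]<\infty$, the martingale $M_n \doteq \sum_{i=1}^{n}\epsilon_i\xi_i$ converges a.s.\ by the martingale convergence theorem, which yields the Kushner--Clark noise condition (iv). Fourth, (v) is immediate from the projection. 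Having checked all hypotheses, Theorem 5.2.1 of \citet{kushner2003stochastic} gives that $\theta^i$ converges a.s.\ to the limit set of the projected ODE \eqref{eq:ODE_SGD}, and in particular, restricting attention to trajectories that converge, to the set $\cK$ of asymptotically stable equilibria.

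The main subtlety — though it is largely bookkeeping rather than a genuine obstacle — is that the bias condition (iii) is imposed as a hypothesis rather than derived; a reader should note that Theorem \ref{thm:bias_bound} makes it \emph{achievable} by a concrete sample schedule. Indeed, since $\lvert\beta_i\rvert = O(n_i^{-1/2})$, any choice with $\sum_i \epsilon_i n_i^{-1/2} < \infty$ (for instance $\epsilon_i \sim i^{-1}$ and $n_i \sim i^{2\rho}$ for any $\rho>0$, or even $n_i$ growing slowly enough) suffices; this is worth stating as a remark so the convergence guarantee is not vacuous. A second point to handle carefully is the interplay between the projection operator $\hat\Gamma_\theta$ and the equilibrium set: one should clarify that $\cK$ is exactly the set of points where the projected gradient vanishes, which includes the unconstrained local maxima of $\cvar$ lying in the interior of $\Theta$, so that ``local CVaR optimum'' in the abstract is justified. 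Beyond these remarks, the proof is a direct citation, as the theorem statement already indicates.
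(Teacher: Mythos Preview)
Your proposal is correct and matches the paper's approach exactly: the paper states explicitly that the theorem ``is a direct application of Theorem 5.2.1 of \citet{kushner2003stochastic}, and given here without proof,'' so your verification of the Kushner--Clark hypotheses is precisely the intended argument, only with the details filled in. Your boundedness argument for the noise term and the martingale-convergence step for condition (iv) are the right ingredients, and your closing remarks about the achievability of the bias condition via Theorem~\ref{thm:bias_bound} mirror the paper's own commentary following the theorem.
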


Note that from the discussion in Section \ref{sec:CVaR_Algorithm}, the requirement $\sum_{i=1}^\infty \epsilon_i \left| \E \left[ \Delta_{j;n_i} \right] - \Dth\cvar (R;\theta^i)\right|< \infty$ implies that we must have $\lim_{i \to \infty} n_i = \infty$. However, the rate of $n_i$ could be very slow, for example, using the bound of Theorem \ref{thm:bias_bound} the requirement may be satisfied by choosing $\epsilon_i = 1/i$ and $n_i = (\log i)^4$.


\section{Application to Reinforcement Learning}\label{sec:RL}

In this section we show that the \texttt{CVaRSGD} algorithm may be used in an RL policy-gradient type scheme, for optimizing performance criteria that involve the CVaR of the total return. We first describe some preliminaries and our RL setting, and then describe our algorithm.

We consider an episodic\footnote{Also known as a stochastic shortest path \cite{Ber2012DynamicProgramming}.} Markov Decision Problem (MDP) in discrete time with a finite state space $\cS$ and a finite action space $\cA$.
At time $t \in \{0,1,2,\dots\}$ the state is $s_t$, and an action $a_t$ is chosen according to a parameterized policy $\pi_\theta$, which assigns a distribution over actions $f_{a|h}(a|h;\theta)$ according to the observed history of states $h_t = s_0,\dots,s_t$. Then, an immediate random reward $\rho_t \sim f_{\rho|s,a} (\rho|s,a)$ is received,
and the state transitions to $s_{t+1}$ according to the MDP transition probability $f_{s'|s,a}(s'|s,a)$. We denote by $\zeta_0$ the initial state distribution and by $s^*$ a terminal state, and we assume that for all $\theta$, $s^*$ is reached w.p. 1.

For some policy $\pi_\theta$, let $s_0,a_0,\rho_0,s_1,a_1,\rho_1,\dots,s_\tau$ denote a state-action-reward trajectory from the MDP under that policy, that terminates at time $\tau$, i.e., $s_\tau = s^*$. The trajectory is a random variable, and we decompose\footnote{This decomposition is not restrictive, and used only to illustrate the definitions of Section \ref{sec:CVaR_Sensitivity}. One may alternatively consider a continuous state space, or discrete rewards, so long as Assumptions \ref{assumption:bounded_cont_rZ}, \ref{assumption:D_finite_sum}, and \ref{assumption:nice_f_g} hold.} it into a discrete part $Y \doteq s_0,a_0,s_1,a_1,\dots,s^*$ and a continuous part $X \doteq \rho_0,\rho_1,\dots,\rho_{\tau-1}$. Our quantity of interest is the total reward along the trajectory $R \doteq \sum_{t=0}^{\tau} \rho_t$.
In standard RL, the objective is to find the parameter $\theta$ that maximizes the expected return $V(\theta) = \E^\theta \left[ R \right]$. Policy gradient methods \cite{baxter2001infinite,MarTsi98,peters_reinforcement_2008} use simulation to estimate $\partial V(\theta) / \partial \theta_j$, and then perform stochastic gradient ascent on the parameters $\theta$. In this work we are risk-sensitive, and our goal is to \emph{maximize the CVaR of the total return}
$
J(\theta) \doteq \cvar (R;\theta).
$
In the spirit of policy gradient methods, we estimate $\partial J(\theta) / \partial \theta_j$ from simulation, using \texttt{GCVaR}, and optimize $\theta$ using \texttt{CVaRSGD}. We now detail our approach.

%
%

First, it is well known \cite{MarTsi98} that by the Markov property of the state transitions:
\begin{equation}\label{eq:MDP_dlogf}
\partial \log f_{Y}\left(Y;\theta\right) / \partial \theta = \sum_{t=0}^{\tau-1} \partial \log f_{a|h}(a_t|h_t;\theta)/ \partial \theta.
\end{equation}
Also, note that in our formulation we have
\begin{equation}\label{eq:MDP_dlogf2}
    \partial \log f_{\bX|Y}\left(x_{i}|y_i;\theta\right) / \partial \theta=0,
\end{equation}
since the reward does not depend on $\theta$ directly.

To apply \texttt{CVaRSGD} in the RL setting, at each iteration $i$ of the algorithm we simulate $n_i$ trajectories $x_1,y_1,\dots,x_{n_i},y_{n_i}$ of the MDP using policy $\pi_{\theta^i}$ (each $x_k$ and $y_k$ here together correspond to a single trajectory, as realizations of the random variables $X$ and $Y$ defined above). We then apply the \texttt{GCVaR} algorithm to obtain an estimate $\Delta_{j;n_i}$ of $\partial J(\theta) / \partial \theta_j$, using the simulated trajectories $x_1,y_1,\dots,x_{n_i},y_{n_i}$, Eq. \eqref{eq:MDP_dlogf}, and Eq. \eqref{eq:MDP_dlogf2}.
Finally, we update the policy parameter according to Eq. \eqref{eq:theta_update_SGD}.
Note that due to Eq. \eqref{eq:MDP_dlogf}, the transition probabilities of the MDP, which are generally not known to the decision maker, are not required for estimating the gradient using \texttt{GCVaR}. Only policy-dependent terms are required.

We should remark that for the standard RL criterion $V(\theta)$, a Markov policy that depends only on the current state suffices to achieve optimality \cite{Ber2012DynamicProgramming}. For the CVaR criterion this is not necessarily the case. \citet{bauerle2011markov} show that under certain conditions, an augmentation of the current state with a function of the accumulated reward suffices for optimality. In our simulations, we used a Markov policy, and still obtained useful and sensible results.

Assumptions \ref{assumption:bounded_cont_rZ}, \ref{assumption:D_finite_sum}, and \ref{assumption:nice_f_g}, that are required for convergence of the algorithm, are reasonable for the RL setting, and may be satisfied, for example, when $f_{\rho|s,a} (\rho|s,a)$ is smooth, and $\partial\log f_{a|h}(a|h;\theta) / \partial \theta_j$ is well defined and bounded. This last condition is standard in policy gradient literature, and a popular policy representation that satisfies it is softmax action selection \cite{sutton_policy_2000,MarTsi98}, given by
$
f_{a|h}(a|h;\theta) = \frac{\exp ( \phi(h,a)^{\T} \theta)}{\sum_{a'} \exp(\phi(h,a')^{\T} \theta)},
$
where $\phi(h,a)\in \R^k$ are a set of $k$ features that depend on the history and action.

In some RL domains, the reward takes only discrete values. While this case is not specifically covered by the theory in this paper, one may add an arbitrarily small smooth noise to the total reward for our results to hold. Since such a modification has negligible impact on performance, this issue is of little importance in practice. In our experiments the reward was discrete, and we did not observe any problem.

\begin{figure*}[ht]
\centering
\includegraphics[width=\textwidth]{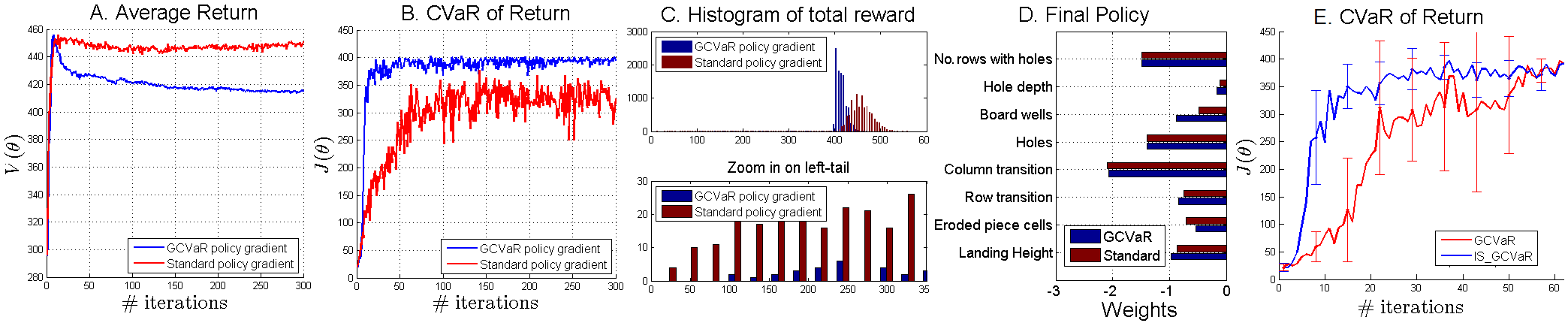}
\vskip -0.1in
\caption{\textbf{\texttt{GCVaR} vs. policy gradient.} (A,B) Average return (A) and CVaR ($\alpha=0.05$) of the return (B) for \texttt{CVaRSGD} and standard policy-gradient vs. iteration. (C) Histogram (counts from 10,000 independent runs) of the total return of the final policies. The lower plot is a zoom-in on the left-tail, and clearly shows the risk-averse behavior of the \texttt{CVaRSGD} policy. (D) Final policy parameters. Note the difference in the Board Well feature, which encourages risk taking. (E) CVaR ($\alpha=0.01$) of the return for \texttt{CVaRSGD} vs. iteration, with and without importance sampling.}
\label{fig1}
\vskip -0.1in
\end{figure*}

\subsection{Experimental Results}\label{sec:experiments}
We examine Tetris as a test case for our algorithms. Tetris is a popular RL benchmark that has been studied extensively. The main challenge in Tetris is its large state space, which necessitates some form of approximation in the solution technique. Many approaches to learning controllers for Tetris are described in the literature, among them are approximate value iteration \cite{tsitsiklis1996feature}, policy gradients \cite{kakade2001natural,furmston2012unifying}, and modified policy iteration \cite{gabillon2013approximate}.
The standard performance measure in Tetris is the expected number of cleared lines in the game. Here, we are interested in a risk-averse performance measure, captured by the CVaR of the total game score. Our goal in this section is to compare the performance of a policy optimized for the CVaR criterion versus a policy obtained using the standard policy gradient method. As we will show, optimizing the CVaR indeed produces a different policy, characterized by a risk-averse behavior. We note that at present, the best results in the literature (for the standard performance measure) were obtained using a modified policy iteration approach \cite{gabillon2013approximate}, and not using policy gradients. We emphasize that our goal here is not to compete with those results, but rather to illustrate the application of \texttt{CVaRSGD}. We do point out, however, that whether the approach of \citet{gabillon2013approximate} could be extended to handle a CVaR objective is currently not known.


We used the regular  $10\times 20$ Tetris board with the 7 standard shapes (a.k.a. \textit{tetrominos}). In order to induce risk-sensitive behavior, we modified the reward function of the game as follows. The score for clearing 1,2,3 and 4 lines is 1,4,8 and 16 respectively. In addition, we limited the maximum number of steps in the game to 1000. These modifications strengthened the difference between the risk-sensitive and nominal policies, as they induce a tradeoff between clearing many 'single' lines with a low profit, or waiting for the more profitable, but less frequent, 'batches'.


We used the softmax policy, with the feature set of \citet{thiery2009improvements}.
Starting from a fixed policy parameter $\theta_0$, which was obtained by running several iterations of standard policy gradient (giving both methods a 'warm start'), we ran both \texttt{CVaRSGD} and standard policy gradient\footnote{Standard policy gradient is similar to \texttt{CVaRSGD} when $\alpha = 1$. However, it is common to subtract a baseline from the reward in order to reduce the variance of the gradient estimate. In our experiments, we used the average return $<r>$ as a baseline, and our gradient estimate was $\frac{1}{N}\sum_{i=1}^{N} \Dt{ \log f_Y\left(y_{i};\theta\right)}(r(x_i,y_{i})-<r>)$.} for enough iterations such that both algorithms (approximately) converged. We set $\alpha = 0.05$ and $N = 1000$.

In Fig.~\ref{fig1}A and Fig.~\ref{fig1}B we present the average return $V(\theta)$ and CVaR of the return $J(\theta)$ for the policies of both algorithms at each iteration (evaluated by MC on independent trajectories). Observe that for \texttt{CVaRSGD}, the average return has been compromised for a higher CVaR value.

This compromise is further explained in Fig.~\ref{fig1}C, where we display the reward distribution of the final policies.
It may be observed that the left-tail distribution of the CVaR policy is significantly lower than the standard policy. For the risk-sensitive decision maker, such results are very important, especially if the left-tail contains catastrophic outcomes, as is common in many real-world domains, such as finance.
To better understand the differences between the policies, we compare the final policy parameters $\theta$ in Fig.~\ref{fig1}D. The most significant difference is in the parameter that corresponds to the Board Well feature. A \textit{well} is a succession of unoccupied cells in a column, such that their left and right cells are both occupied. The controller trained by \texttt{CVaRSGD} has a smaller negative weight for this feature, compared to the standard controller, indicating that actions which create deep-wells are repressed. Such wells may lead to a high reward when they get filled, but are risky as they heighten the board.

To demonstrate the importance of IS in optimizing the CVaR when $\alpha$ is small, we chose $\alpha=0.01$, and $N = 200$, and compared \texttt{CVaRSGD} against its IS version, \texttt{IS\_CVaRSGD}, described in the supplementary material.
As Fig.~\ref{fig1}E shows, \texttt{IS\_GCVaRSGD} converged significantly faster, improving the convergence rate by more than a factor of 2. The full details are provided in the supplementary material.

\section{Conclusion and Future Work}
We presented a novel LR-style formula for the gradient of the CVaR performance criterion. Based on this formula, we proposed a sampling-based gradient estimator, and a stochastic gradient descent procedure for CVaR optimization that is guaranteed to converge to a local optimum. To our knowledge, this is the first extension of the LR method to the CVaR performance criterion, and our results extend CVaR optimization to new domains.

We evaluated our approach empirically in an RL domain: learning a risk-sensitive policy for Tetris. To our knowledge, such a domain is beyond the reach of existing CVaR optimization approaches. Moreover, our empirical results show that optimizing the CVaR indeed results in useful risk-sensitive policies, and motivates the use of simulation-based optimization for risk-sensitive decision making.




\section*{Acknowledgments}
The authors thank Odalric-Ambrym Maillard for many helpful discussions. The research leading to these results has received funding from the European Research Council under the European Union's Seventh Framework Program (FP/2007-2013) / ERC Grant Agreement n. 306638.

\newpage
\small
\bibliographystyle{aaai}
\bibliography{my_library}

\newpage
\normalsize
\onecolumn
\appendix
\section{Proof of Proposition \ref{prop:grad_general}}\label{supp:grad_proof}
\begin{proof}
The main difficulty in extending the proof of Proposition \ref{prop:Grad} to this case is in applying the Leibnitz rule in a multi-dimensional case. Such an extension is given by \citep{flanders1973differentiation}, which we now state.

We are given an $n-$dimensional $\theta-$dependent chain (field of integration) $D_\theta$ in $\R^n$. We also have an exterior differential $n-$form whose coefficients are $\theta$-dependent:
\begin{equation*}
\omega = f(\bx,\theta)dx_1 \wedge \cdots \wedge dx_n
\end{equation*}

The general Leibnitz rule\footnote{The formula in \citep{flanders1973differentiation} is for a more general case where $D_\theta$ is not necessarily $n-$dimensional. That formula includes an additional term $\int_{D_\theta} \mathbf{v} \righthalfcup d_{\bx}\omega$, where $d_{\bx}$ is the exterior derivative, which cancels in our case.} is given by
\begin{equation}\label{eq:general_Leibnitz}
\frac{\partial}{\partial\theta} \int_{D_\theta} \omega = \int_{\partial D_\theta} \mathbf{v} \righthalfcup \omega + \int_{D_\theta} \frac{\partial \omega}{\partial \theta}
\end{equation}
where $\mathbf{v}$ denotes the vector field of velocities $\frac{\partial }{\partial \theta} \bx$ of $D_\theta$, and $\mathbf{v} \righthalfcup \omega$ denotes the interior product between $\mathbf{v}$ and $\omega$ (see \citep{flanders1973differentiation} for more details).

We now write the CVaR explicitly as
\begin{equation*}
\begin{split}
\cvar (R;\theta) &= \frac{1}{\alpha} \sum_{y\in \cY}f_Y(y;\theta)\int_{\bx \in \cD_{y;\theta}} f_{\bX|Y}(\bx|y;\theta) r(\bx,y) d\bx \\
 &= \frac{1}{\alpha} \sum_{y\in \cY}f_Y(y;\theta)\sum_{i=1}^{L_{y;\theta}}\int_{\bx \in D_{y;\theta}^i} f_{\bX|Y}(\bx|y;\theta) r(\bx,y) d\bx,
\end{split}
\end{equation*}
therefore
\begin{equation}\label{eq:proof_cvar_derivtive}
\begin{split}
\Dth \cvar (R;\theta) &= \frac{1}{\alpha} \sum_{y\in \cY}\Dt{f_Y(y;\theta)} \sum_{i=1}^{L_{y;\theta}}\int_{\bx \in D_{y;\theta}^i} f_{\bX|Y}(\bx|y;\theta) r(\bx,y) d\bx \\
&+ \frac{1}{\alpha} \sum_{y\in \cY}f_Y(y;\theta) \sum_{i=1}^{L_{y;\theta}} \Dth \int_{\bx \in D_{y;\theta}^i} f_{\bX|Y}(\bx|y;\theta) r(\bx,y) d\bx
\end{split}
\end{equation}
We now treat each $D_{y;\theta}^i$ in the last sum separately. Let $\cX$ denote the set $[-b,b]^n$ over which $\bX$ is defined. Obviously, $D_{y;\theta}^i \subset \cX$.

We now make an important observation. By definition of the level-set $D_{y;\theta}^i$, and since it is closed by Assumption \ref{assumption:D_finite_sum}, for every $\bx \in \partial D_{y;\theta}^i$ we have that either
\begin{equation}\label{eq:proof_def_a}
\textrm{(a) } r(\bx,y) = \var (R;\theta),
\end{equation}
or
\begin{equation}\label{eq:proof_def_b}
\textrm{(b) } \bx \in \partial \cX, \textrm{ and } r(\bx,y) < \var (R;\theta).
\end{equation}
We write $\partial D_{y;\theta}^i = \partial D_{y;\theta}^{i,a} + \partial D_{y;\theta}^{i,b}$ where the two last terms correspond to the two possibilities in \eqref{eq:proof_def_a} and \eqref{eq:proof_def_b}.

We now claim that for the boundary term $\partial D_{y;\theta}^{i,b}$, we have
\begin{equation}\label{eq:zero_velocity}
\int_{\partial D_{y;\theta}^{i,b}} \mathbf{v} \righthalfcup \omega = 0.
\end{equation}
To see this, first note that by definition of $\cX$, the boundary $\partial \cX$ is smooth and has a unique normal vector at each point, except for a set of measure zero (the corners of $\cX$). Let $\partial \tilde{D}_{y;\theta}^{i,b}$ denote the set of all points in $\partial D_{y;\theta}^{i,b}$ for which a unique normal vector exists. For each $\bx \in \partial \tilde{D}_{y;\theta}^{i,b}$ we let $\mathbf{v}_\perp$ and $\mathbf{v}_\parallel$ denote the normal and tangent (with respect to $\partial \cX$) elements of the velocity $\frac{\partial }{\partial \theta} \bx$ at $\bx$, respectively. Thus,
\begin{equation*}
\mathbf{v} = \mathbf{v}_\perp + \mathbf{v}_\parallel.
\end{equation*}
For some $\epsilon > 0$ let $d_\epsilon$ denote the set $\left\{ \bx \in \partial D_{y;\theta}^{i,b} : r(\bx,y) < \var (R;\theta) - \epsilon \right\}$. From Assumption \ref{assumption:bounded_cont_rZ} we have that $\Dth \var (R;\theta)$ is bounded, therefore there exists $\delta(\epsilon) > 0$ such that for all $\theta'$ that satisfy $\| \theta - \theta' \| < \delta(\epsilon)$ we have $\left|\var (R;\theta') - \var (R;\theta)\right| < \epsilon$, and therefore $d_\epsilon \in \partial D_{y;\theta'}^{i,b}$. Since this holds for every $\epsilon > 0$, we conclude that a small change in $\theta$ does not change $\partial D_{y;\theta}^{i,b}$, and therefore we have
\begin{equation*}
\mathbf{v}_\perp = 0, \quad \forall \bx \in \partial \tilde{D}_{y;\theta}^{i,b}.
\end{equation*}
Furthermore, by definition of the interior product we have
\begin{equation*}
\mathbf{v}_\parallel \righthalfcup \omega = 0.
\end{equation*}
Therefore we have
\begin{equation*}
\int_{\partial D_{y;\theta}^{i,b}} \mathbf{v} \righthalfcup \omega = \int_{\partial \tilde{D}_{y;\theta}^{i,b}} \mathbf{v} \righthalfcup \omega = \int_{\partial \tilde{D}_{y;\theta}^{i,b}} \mathbf{v}_\parallel \righthalfcup \omega = 0,
\end{equation*}
and the claim follows.

Now, let $\omega_y = f_{\bX|Y}(\bx|y;\theta) r(\bx,y) dx_1 \wedge \cdots \wedge dx_n$. Using \eqref{eq:general_Leibnitz}, we have
\begin{equation}\label{eq:proof_omega_leibnitz}
\begin{split}
\Dth \int_{\bx \in D_{y;\theta}^i} \omega_y &= \int_{\partial D_{y;\theta}^i} \mathbf{v} \righthalfcup \omega_y + \int_{D_{y;\theta}^i} \frac{\partial \omega_y}{\partial \theta} \\
&= \int_{\partial D_{y;\theta}^{i,a}} \mathbf{v} \righthalfcup \omega_y + \int_{D_{y;\theta}^i} \frac{\partial \omega_y}{\partial \theta}
\end{split}
\end{equation}
where the last equality follows from \eqref{eq:zero_velocity} and the definition of $\mathbf{v}$.

Let $\tilde{\omega}_y = f_{\bX|Y}(\bx|y;\theta) dx_1 \wedge \cdots \wedge dx_n$. By the definition of $\cD_{y;\theta}$ we have that for all $\theta$
\begin{equation*}
\alpha = \sum_{y\in \cY}f_Y(y;\theta) \int_{\cD_{y;\theta}} \tilde{\omega}_y,
\end{equation*}
therefore, by taking a derivative, and using \eqref{eq:zero_velocity} we have
\begin{equation}\label{eq:proof2}
\begin{split}
0 = \Dth \left(\sum_{y\in \cY}f_Y(y;\theta) \int_{\cD_{y;\theta}} \tilde{\omega}_y \right) =&
\sum_{y\in \cY}\Dt{f_Y(y;\theta)} \int_{\cD_{y;\theta}} \tilde{\omega}_y \\
&+\sum_{y\in \cY}f_Y(y;\theta) \sum_{i=1}^{L_{y;\theta}} \left(\int_{\partial D_{y;\theta}^{i,a}} \mathbf{v} \righthalfcup \tilde{\omega}_y + \int_{D_{y;\theta}^i} \frac{\partial \tilde{\omega}_y}{\partial \theta} \right)
\end{split}
\end{equation}

From \eqref{eq:proof_def_a}, and linearity of the interior product we have
\begin{equation*}
\int_{\partial D_{y;\theta}^{i,a}} \mathbf{v} \righthalfcup \omega_y = \var (R;\theta) \int_{\partial D_{y;\theta}^{i,a}} \mathbf{v} \righthalfcup \tilde{\omega}_y,
\end{equation*}
therefore, plugging in \eqref{eq:proof2} we have
\begin{equation}\label{eq:proof3}
\begin{split}
\sum_{y\in \cY}f_Y(y;\theta) \sum_{i=1}^{L_{y;\theta}}\int_{\partial D_{y;\theta}^{i,a}} \mathbf{v} \righthalfcup \omega_y =& -\var (R;\theta) \sum_{y\in \cY}f_Y(y;\theta) \sum_{i=1}^{L_{y;\theta}}\int_{D_{y;\theta}^i} \frac{\partial \tilde{\omega}_y}{\partial \theta} \\
&  -\var (R;\theta) \sum_{y\in \cY}\Dt{f_Y(y;\theta)} \int_{\cD_{y;\theta}} \tilde{\omega}_y
\end{split}
\end{equation}
Now, note that from \eqref{eq:proof_cvar_derivtive} and \eqref{eq:proof_omega_leibnitz} we have
\begin{equation*}
\begin{split}
\Dth \cvar (R;\theta) =& \frac{1}{\alpha} \sum_{y\in \cY}\Dt{f_Y(y;\theta)} \sum_{i=1}^{L_{y;\theta}}\int_{\bx \in D_{y;\theta}^i} \omega_y \\
&+ \frac{1}{\alpha} \sum_{y\in \cY}f_Y(y;\theta) \sum_{i=1}^{L_{y;\theta}} \int_{D_{y;\theta}^i} \frac{\partial \omega_y}{\partial \theta}\\
&+ \frac{1}{\alpha} \sum_{y\in \cY}f_Y(y;\theta) \sum_{i=1}^{L_{y;\theta}} \int_{\partial D_{y;\theta}^{i,a}} \mathbf{v} \righthalfcup \omega_y,
\end{split}
\end{equation*}
and by plugging in \eqref{eq:proof3} we obtain
\begin{equation*}
\begin{split}
\Dth \cvar (R;\theta) =& \frac{1}{\alpha} \sum_{y\in \cY}\Dt{f_Y(y;\theta)} \sum_{i=1}^{L_{y;\theta}}\int_{D_{y;\theta}^i} \omega_y  -\var (R;\theta) \tilde{\omega}_y\\
&+ \frac{1}{\alpha} \sum_{y\in \cY}f_Y(y;\theta) \sum_{i=1}^{L_{y;\theta}} \int_{D_{y;\theta}^i} \frac{\partial \omega_y}{\partial \theta} -\var (R;\theta) \frac{\partial \tilde{\omega}_y}{\partial \theta}.
\end{split}
\end{equation*}

Finally, using the standard likelihood ratio trick -- multiplying and dividing
by $f_{Y}\left(y;\theta\right)$ inside the first sum, and multiplying and dividing
by $f_{\bX|Y}\left(\bx|y;\theta\right)$ inside the second integral we obtain the required expectation.
\end{proof}

\section{Proof of Theorem \ref{thm:consistent}}\label{sec:proof:consistent}
\begin{proof}
Let $\nu = \var(R;\theta)$. To simplify notation, we also introduce the functions
$h_1(\bx,y) \doteq \left(\Dt{ \log f_{Y}\left(y;\theta\right)} + \Dt{ \log f_{\bX|Y}\left(\bx|y;\theta\right)}\right)r(\bx,y)$, and
$h_2(\bx,y) \doteq \left(\Dt{ \log f_{Y}\left(y;\theta\right)} + \Dt{ \log f_{\bX|Y}\left(\bx|y;\theta\right)}\right)$. Thus we have

\begin{equation}\label{eq:proof_consistent_1}
\begin{split}
\dcvar =& \frac{1}{\alpha N}\sum_{i=1}^{N} \left(h_1(\bx_i,y_i) - h_2(\bx_i,y_i)\evar\right)\mathbf{1}_{r(\bx_{i},y_i)\leq \evar} \\
=& \frac{1}{\alpha N}\sum_{i=1}^{N} \left(h_1(\bx_i,y_i) - h_2(\bx_i,y_i)\nu\right)\mathbf{1}_{r(\bx_{i},y_i)\leq \nu} \\
&+ \frac{1}{\alpha N}\sum_{i=1}^{N} \left(h_1(\bx_i,y_i) - h_2(\bx_i,y_i)\nu\right)\left(\mathbf{1}_{r(\bx_{i},y_i)\leq \evar} - \mathbf{1}_{r(\bx_{i},y_i)\leq \nu}\right)\\
&+ \left( \nu - \evar \right) \frac{1}{\alpha N} \sum_{i=1}^{N} h_2(\bx_i,y_i)\left(\mathbf{1}_{r(\bx_{i},y_i)\leq \evar} \right)\\
\end{split}
\end{equation}
We furthermore let $D(\bx,y) \doteq h_1(\bx,y) - h_2(\bx,y)\nu$.
Note that by Assumption \ref{assumption:bounded_cont_rZ}, $D$ is bounded.

By Proposition \ref{prop:grad_general}, and the strong law of large numbers, we have that w.p. 1
\begin{equation}\label{eq:proof_consistent_2}
\frac{1}{\alpha N} \sum_{i=1}^{N} \left( h_1(\bx_i,y_i) - h_2(\bx_i,y_i)\nu\right)\mathbf{1}_{r(\bx_{i},y_i)\leq \nu} \to \Dth\cvar (R;\theta).
\end{equation}
We now show that the two additional terms in \eqref{eq:proof_consistent_1} vanish as $N\to \infty$. By H\"{o}lder's inequality
\begin{equation}\label{eq:proof_consistent_3}
\left| \frac{1}{N}\sum_{i=1}^{N} D(\bx_i,y_i) \left(\mathbf{1}_{r(\bx_{i},y_i)\leq \evar} - \mathbf{1}_{r(\bx_{i},y_i)\leq \nu}\right) \right|
 \leq \left( \frac{1}{N}\sum_{i=1}^{N} \left|D(\bx_i,y_i)\right|^2\right)^{0.5}
\cdot \left(\frac{1}{N}\sum_{i=1}^{N} \left|\mathbf{1}_{r(\bx_{i},y_i)\leq \evar} - \mathbf{1}_{r(\bx_{i},y_i)\leq \nu}\right|^2\right)^{0.5},
\end{equation}
and $\left( \frac{1}{N}\sum_{i=1}^{N} \left|D(\bx_i,y_i)\right|^2\right)^{0.5}$ is bounded. Also, note that
\begin{equation*}
\begin{split}
\frac{1}{N}\sum_{i=1}^{N} \left|\mathbf{1}_{r(\bx_{i},y_i)\leq \evar} - \mathbf{1}_{r(\bx_{i},y_i)\leq \nu}\right|^2 &= \frac{1}{N}\sum_{i=1}^{N} \left|\mathbf{1}_{r(\bx_{i},y_i)\leq \evar} - \mathbf{1}_{r(\bx_{i},y_i)\leq \nu}\right| \\
&= \left( \mathbf{1}_{\evar \leq \nu} - \mathbf{1}_{\nu \leq \evar}\right) \frac{1}{N}\sum_{i=1}^{N} \left(\mathbf{1}_{r(\bx_{i},y_i)\leq \evar} - \mathbf{1}_{r(\bx_{i},y_i)\leq \nu}\right)
\end{split}
\end{equation*}
By Proposition 4.1 of \citep{hong_simulating_2009}, we have that w.p. 1
\begin{equation*}
\left( \mathbf{1}_{\evar \leq \nu} - \mathbf{1}_{\nu \leq \evar}\right) \frac{1}{N}\sum_{i=1}^{N} \left(\mathbf{1}_{r(\bx_{i},y_i)\leq \evar} - \mathbf{1}_{r(\bx_{i},y_i)\leq \nu}\right) \to 0.
\end{equation*}
By the continuous mapping theorem, we thus have that w.p. 1
\begin{equation*}
\left(\frac{1}{N}\sum_{i=1}^{N} \left|\mathbf{1}_{r(\bx_{i},y_i)\leq \evar} - \mathbf{1}_{r(\bx_{i},y_i)\leq \nu}\right|^2 \right)^{0.5} \to 0,
\end{equation*}
therefore, using Eq. \eqref{eq:proof_consistent_3} we have that w.p. 1
\begin{equation}\label{eq:proof_consistent_4}
\frac{1}{N}\sum_{i=1}^{N} D(\bx_i,y_i) \left(\mathbf{1}_{r(\bx_{i},y_i)\leq \evar} - \mathbf{1}_{r(\bx_{i},y_i)\leq \nu}\right) \to 0.
\end{equation}
We now turn to the last sum in \eqref{eq:proof_consistent_1}. by Assumption \ref{assumption:bounded_cont_rZ}, $h_2$ is bounded, and therefore $\frac{1}{\alpha N} \sum_{i=1}^{N} h_2(\bx_i,y_i)\left(\mathbf{1}_{r(\bx_{i},y_i)\leq \evar} \right)$ is bounded. It is well-known \cite{david1981order} that the sample-quantile is a consistent estimator, thus $\nu - \evar \to 0$, and therefore
\begin{equation}\label{eq:proof_consistent_5}
\left( \nu - \evar \right) \frac{1}{\alpha N} \sum_{i=1}^{N} h_2(\bx_i,y_i)\left(\mathbf{1}_{r(\bx_{i},y_i)\leq \evar} \right) \to 0.
\end{equation}

Plugging \eqref{eq:proof_consistent_2}, \eqref{eq:proof_consistent_4}, and \eqref{eq:proof_consistent_5} in \eqref{eq:proof_consistent_1} gives the stated result.
\end{proof}

\section{Proof of Theorem \ref{thm:bias_bound}}\label{sec:proof:bias_bound}
We follow the notation of Section \ref{sec:proof:consistent}.

In our analysis we use a result of \citep{hong_simulating_2009}, which we now state.
Let $\bx_{1},y_1,\dots,\bx_{N},y_N$ be $N$ samples drawn i.i.d. from $f_{\bX,Y}(\bx,y;\theta)$.
\begin{theorem}{(Theorem 4.2 of \citep{hong_simulating_2009})}\label{thm:hong_liu_bound}
Let Assumption \ref{assumption:nice_f_g}, and the assumptions required for Proposition \ref{prop:grad_general} hold.
Let
\begin{equation*}
\bar{\Delta}_N = \frac{1}{\alpha N}\sum_{i=1}^{N} D(\bx_{i},y_i) \cdot \mathbf{1}_{r(\bx_{i},y_i)\leq \evar}.
\end{equation*}
Then $\E \left[ \bar{\Delta}_N \right] - \Dth \cvar (R;\theta)$ is $o(N^{-1/2})$.
\end{theorem}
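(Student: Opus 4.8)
The plan is to isolate the single source of bias, namely the replacement of the true quantile $\var(R;\theta)$ by the empirical one $\evar$ inside the indicator, and to show that after taking expectations this replacement contributes only $o(N^{-1/2})$. Writing $\nu = \var(R;\theta)$ and $r_i = r(\bx_i,y_i)$ for brevity, decompose
\begin{equation*}
\bar{\Delta}_N = \frac{1}{\alpha N}\sum_{i=1}^N D(\bx_i,y_i)\mathbf{1}_{r_i\leq\nu} + \frac{1}{\alpha N}\sum_{i=1}^N D(\bx_i,y_i)\left(\mathbf{1}_{r_i\leq\evar} - \mathbf{1}_{r_i\leq\nu}\right).
\end{equation*}
The first sum is an average of i.i.d. terms with common expectation $\tfrac{1}{\alpha}\E^\theta[D(\bX,Y)\mathbf{1}_{R\leq\nu}] = \E^\theta[D(\bX,Y)\mid R\leq\nu]$ (using $\prob(R\leq\nu)=\alpha$), which by Proposition \ref{prop:grad_general} equals $\Dth\cvar(R;\theta)$ exactly, since $D(\bx,y)$ is precisely the score weight times $(r(\bx,y)-\var(R;\theta))$. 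Hence the full bias $\E[\bar{\Delta}_N]-\Dth\cvar(R;\theta)$ equals the expectation of the second sum, call it $\E[B_N]$, and the entire problem reduces to establishing $\E[B_N]=o(N^{-1/2})$.

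Next I would analyze $B_N$, whose summands are nonzero only for samples whose reward falls between $\evar$ and $\nu$. Two facts drive the argument. First, because $\evar=r_{\lceil\alpha N\rceil}^s$ is an order statistic and $R$ is continuous, the count below it is deterministic almost surely: $\sum_i\mathbf{1}_{r_i\leq\evar}=\lceil\alpha N\rceil$, so $\sum_i(\mathbf{1}_{r_i\leq\evar}-\mathbf{1}_{r_i\leq\nu})=\lceil\alpha N\rceil-\#\{i:r_i\leq\nu\}$, whose expectation is $\lceil\alpha N\rceil-\alpha N=O(1)$. Second, the empirical quantile concentrates, $\evar-\nu=O_p(N^{-1/2})$, so the offending samples lie in an $O(N^{-1/2})$ window about $\nu$. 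The idea is to replace each weight $D(\bx_i,y_i)$ first by its conditional mean $g(r_i;\theta)=\E[D(\bX,Y)\mid R=r_i]$ and then by $g(\nu;\theta)$: the leading piece $\tfrac{g(\nu;\theta)}{\alpha N}\sum_i(\mathbf{1}_{r_i\leq\evar}-\mathbf{1}_{r_i\leq\nu})$ has expectation $\tfrac{g(\nu;\theta)}{\alpha N}(\lceil\alpha N\rceil-\alpha N)=O(1/N)=o(N^{-1/2})$, exploiting the exact-count cancellation.

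It then remains to bound the two approximation errors. The error from replacing $D(\bx_i,y_i)$ by $g(r_i;\theta)$ involves the centered term $D(\bx_i,y_i)-g(r_i;\theta)$, which has conditional mean zero given $r_i$; conditioning on the order statistics and using that these centered terms average out contributes $o(N^{-1/2})$. The error from replacing $g(r_i;\theta)$ by $g(\nu;\theta)$ is controlled by continuity of $g(\cdot;\theta)$ at $\nu$ (Assumption \ref{assumption:nice_f_g}): on the $O(N^{-1/2})$ window the difference $g(r_i;\theta)-g(\nu;\theta)$ is $o(1)$, and since only $O(\sqrt{N})$ samples fall there, the total is $o(1)\cdot O(\sqrt{N})/N=o(N^{-1/2})$. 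Continuity of $f_R(\cdot;\theta)$ and positivity $f_R(\nu;\theta)>0$ guarantee that the window width and the sample count behave as claimed; making these heuristics rigorous is exactly the content of Theorem 4.2 of \citep{hong_simulating_2009}, whose argument transfers once $D$, $g$, and $f_R$ here are identified with the corresponding objects there.

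I expect the main obstacle to be the $o(N^{-1/2})$ sharpness rather than a mere $O(N^{-1/2})$ bound: the crude estimate on $B_N$ only gives $O(N^{-1/2})$, and the upgrade rests entirely on the exact-count identity $\sum_i\mathbf{1}_{r_i\leq\evar}=\lceil\alpha N\rceil$ combined with the conditional-mean-zero structure of $D(\bx_i,y_i)-g(r_i;\theta)$. Handling the dependence between $\evar$ and the individual summands cleanly — for instance through a Bahadur-type representation of the empirical quantile, $\evar-\nu=(\alpha-\hat{F}(\nu))/f_R(\nu;\theta)+o_p(N^{-1/2})$ — is the delicate step, since it is what lets the $O(1)$ expected-count discrepancy, rather than the $O(\sqrt{N})$ typical fluctuation, govern the leading behavior of $\E[B_N]$.
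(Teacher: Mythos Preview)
The paper does not supply its own proof of this statement. It is presented as a quotation of Theorem~4.2 of \cite{hong_simulating_2009}, followed only by the remark that ``the proof of the theorem follows through also with our definition of $D$, and using Proposition~\ref{prop:grad_general}.'' In other words, the paper's ``proof'' is simply an appeal to Hong and Liu's original argument, after noting that their $D$ (a perturbation-analysis weight) can be replaced by the likelihood-ratio weight here without affecting the reasoning.

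Your proposal therefore goes considerably further than the paper does: you actually sketch the mechanism behind the $o(N^{-1/2})$ bias bound. The outline --- split off the unbiased true-quantile term via Proposition~\ref{prop:grad_general}, then control the correction $B_N$ by (i) the exact-count identity $\sum_i \mathbf{1}_{r_i\leq\evar}=\lceil\alpha N\rceil$, (ii) replacing $D$ by its conditional mean $g(r_i;\theta)$ and then by $g(\nu;\theta)$ using the continuity from Assumption~\ref{assumption:nice_f_g}, and (iii) a Bahadur-type handling of the dependence between $\evar$ and the summands --- is a faithful reconstruction of how such results are proved and is consistent with Hong and Liu's argument. There is nothing to compare against in the paper beyond the citation itself.
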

In the original theorem of \citep{hong_simulating_2009}, $D$ is defined differently, corresponding to the perturbation analysis type gradient estimator. However, the proof of the theorem follows through also with our definition of $D$, and using Proposition \ref{prop:grad_general}.

We are now ready to prove Theorem \ref{thm:bias_bound}.
\begin{proof}
From Eq. \eqref{eq:proof_consistent_1} we have
\begin{equation}\label{eq:proof_bias_1}
\begin{split}
\E \left[ \dcvar \right] - \Dth\cvar (R;\theta) = &\E \left[ \frac{1}{\alpha N}\sum_{i=1}^{N} D(\bx_i,y_i) \cdot \mathbf{1}_{r(\bx_{i},y_i)\leq \evar}\right]\\
&+ \E \left[ \left( \nu - \evar \right) \frac{1}{\alpha N} \sum_{i=1}^{N} h_2(\bx_i,y_i)\left(\mathbf{1}_{r(\bx_{i},y_i)\leq \evar} \right) \right].
\end{split}
\end{equation}
The first term in the r.h.s. of Eq. \eqref{eq:proof_bias_1} is $o(N^{-1/2})$ by Theorem \ref{thm:hong_liu_bound}. We now bound the second term.

Let $\bar{h}_2$ denote a bound on $h_2$, which, by Assumption \ref{assumption:bounded_cont_rZ}, is finite. Note that we have $\left| \frac{1}{N} \sum_{i=1}^{N} h_2(\bx_i,y_i)\left(\mathbf{1}_{r(\bx_{i},y_i)\leq \evar} \right) \right| \leq \bar{h}_2$ with probability 1.
Therefore,
\begin{equation*}\label{eq:proof_bias_2}
\E \left[ \left( \nu - \evar \right) \frac{1}{\alpha N} \sum_{i=1}^{N} h_2(\bx_i,y_i)\left(\mathbf{1}_{r(\bx_{i},y_i)\leq \evar} \right) \right] \leq \E \left[ \left| \nu - \evar \right| \left| \frac{1}{\alpha N} \sum_{i=1}^{N} h_2(\bx_i,y_i)\left(\mathbf{1}_{r(\bx_{i},y_i)\leq \evar} \right) \right| \right] \leq \frac{\bar{h}_2}{\alpha} \E \left[ \left| \nu - \evar \right| \right].
\end{equation*}

We will show that $\E \left[ \left| \nu - \evar \right| \right]$ is $O(N^{-1/2})$. 
It is well-known \citep{david1981order} that the empirical $\alpha-$quantile may be written as follows:
\begin{equation}\label{eq:proof_bias_3}
\evar = \nu - \frac{\hat{F}_R(\nu) - \alpha}{f_R(\nu)} + \tilde{R},
\end{equation}
where $\hat{F}_R(\cdot)$ is the empirical C.D.F. of $R$, and $\tilde{R}$ is $O(N^{-1/2})$ in probability. Thus, we have
\begin{equation}\label{eq:proof_bias_4}
\E \left[ \left| \nu - \evar \right| \right] \leq f_R(\nu)^{-1}\left(\E \left[ \left|\hat{F}_R(\nu) - \alpha\right| \right] + \E \left[ \left| \tilde{R} \right| \right]\right).
\end{equation}
Note that since $R$ is bounded, $\evar$ is also bounded, and it is clear from Eq. \eqref{eq:proof_bias_3} that $\tilde{R}$ is bounded, and therefore uniformly integrable. Since $\tilde{R}$ is also $O(N^{-1/2})$ in probability, we conclude that $\E \left[ \left| \tilde{R} \right| \right]$ is $O(N^{-1/2})$.
Let $y_i \doteq \mathbf{1}_{r(\bx_{i},y_i)\leq \nu}$. Then by definition, the empirical C.D.F. satisfies
\begin{equation*}
\hat{F}_R(\nu) = \frac{1}{N} \sum_{i=1}^{N} y_i,
\end{equation*}
and the $y_i$'s are i.i.d., and satisfy $\E [ y_i ] = \alpha$, and $\variance [y_i] = \alpha (1- \alpha)$.
Observe that
\begin{equation*}
0 \leq \variance \left[ \left|\hat{F}_R(\nu) - \alpha\right|\right] = \E \left[ \left|\hat{F}_R(\nu) - \alpha\right|^2\right] - \left( \E \left[ \left|\hat{F}_R(\nu) - \alpha\right|\right] \right)^2,
\end{equation*}
therefore
\begin{equation*}
\E \left[ \left|\hat{F}_R(\nu) - \alpha\right|\right] \leq \sqrt{\E \left[ \left|\hat{F}_R(\nu) - \alpha\right|^2\right]},
\end{equation*}
but
\begin{equation*}
\E \left[ \left|\hat{F}_R(\nu) - \alpha\right|^2\right] = \variance \left[ \hat{F}_R(\nu) \right] = \frac{\alpha (1- \alpha)}{N},
\end{equation*}
therefore $\E \left[ \left|\hat{F}_R(\nu) - \alpha\right|\right]$ is $O(N^{-1/2})$. From Eq. \eqref{eq:proof_bias_4} we thus have that $\E \left[ \left| \nu - \evar \right| \right]$ is $O(N^{-1/2})$, which completes the proof.
\end{proof}

\section{Example: the Importance of the VaR Baseline in \texttt{GCVaR}}\label{sec:bias_example}
Here we show that the subtraction of the VaR baseline from the reward in \texttt{GCVaR} (Eq. \eqref{eq:simple_alg}) is crucial, and without it the error in the gradient estimate may be arbitrarily large.

Consider the following example, in the setting of proposition \ref{prop:Grad}: $Z \sim Normal(\theta,1)$, and $\alpha=0.5$. The true CVaR gradient is constant:
\begin{equation*}
\Dth \! \cvar (\!Z;\!\theta) = \E^{\theta}\left[ \!\left.\Dt{ \log \! f_Z(Z;\theta)}(Z - \var (Z;\theta))\right|\!Z \!\leq\! \var \!(Z;\!\theta)\! \right] = 1,
\end{equation*}
while the term due to the baseline is
\begin{equation*}
\E^{\theta}\left[ \!\left.\Dt{ \log \! f_Z(Z;\theta)}(- \var (Z;\theta))\right|\!Z \!\leq\! \var \!(Z;\!\theta)\! \right] = -\sqrt{\frac{2}{\pi}}\theta,
\end{equation*}
which is unbounded in $\theta$.

Thus, we have that $\E^{\theta}\left[ \!\left.\Dt{ \log \! f_Z(Z;\theta)}(Z)\right|\!Z \!\leq\! \var \!(Z;\!\theta)\! \right] = 1 + \sqrt{\frac{2}{\pi}}\theta$, meaning that a naive estimator without the baseline may have an arbitrarily large error, and, for $\theta<-\sqrt{\frac{\pi}{2}}$, would even point in the opposite direction!

\section{Importance Sampling}\label{sec:IS}
For very low quantiles, i.e., $\alpha$ close to $0$, the estimator \texttt{GCVaR} of Eq. \eqref{eq:simple_alg} would have a high variance, since the averaging is effectively only over $\alpha N$ samples. In order to mitigate this problem, we now propose an importance sampling procedure for estimating $\Dth\cvar (R;\theta)$.

Importance sampling (IS; \citep{rubinstein2011simulation}) is a general procedure for reducing the variance of Monte--Carlo (MC) estimates. We first describe it in a general context, and then give the specific implementation for the CVaR sensitivity estimator.

\subsection{Background}\label{sec:IS_background}
Consider the following general problem. We wish to estimate the expectation $l = \E \left[ H(X)\right]$ where $X$ is a random variable with P.D.F. $f(x)$, and $H(x)$ is some function. The MC solution is given by $\hat{l} = \frac{1}{N}\sum_{i=1}^{N}H(x_i)$, where $x_i \sim f$ are drawn i.i.d.

The IS method aims to reduce the variance of the MC estimator by using a different sampling distribution for the samples $x_i$. Assume we are given a sampling distribution $g(x)$, and that $g$ dominates $f$ in the sense that $g(x) = 0 \Rightarrow f(x) = 0$. We let $\E_f$ and $\E_g$ denote expectations w.r.t. $f$ and $g$, respectively. Observe that
$
l = \E_f \left[ H(X)\right] = \E_g \left[ H(X)\frac{f(X)}{g(X)}\right],
$
and we thus define the IS estimator $\hat{l}_{\is}$ as
\begin{equation}\label{eq:IS_general}
\hat{l}_{\is} = \frac{1}{N}\sum_{i=1}^{N}H(x_i)\frac{f(x_i)}{g(x_i)},
\end{equation}
where the $x_i$'s are drawn i.i.d., and now $x_i \sim g$. Obviously, selecting an appropriate $g$ such that $\hat{l}_{\is}$ indeed has a lower variance than $\hat{l}$ is the heart of the problem. One approach is by the \emph{variance minimization} method \cite{rubinstein2011simulation}. Here, we are given a family of distributions $g(x;\omega)$ parameterized by $\omega$, and we aim to find an $\omega$ that minimizes the variance $V(\omega) = \variance_{x_i\sim g(\cdot;\omega)} \left( \hat{l}_{\is} \right)$. A straightforward calculation shows that
$V(\omega) = \E_f \left[ H(X)^2 \frac{f(X)}{g(X;\omega)}\right] - l^2$, and since $l$ does not depend on $\omega$, we are left with the optimization problem
$
\min_\omega \E_f \left[ H(X)^2 \frac{f(X)}{g(X;\omega)}\right],
$
which is typically solved approximately, by solving the sampled average approximation (SAA)
\begin{equation}\label{eq:IS_SAA}
\min_\omega \frac{1}{N_{\textrm{SAA}}}\sum_{i=1}^{N_{\textrm{SAA}}} \left[ H(x_i)^2 \frac{f(x_i)}{g(x_i;\omega)}\right],
\end{equation}
where $x_i \sim f$ are i.i.d. Numerically, the SAA may be solved using (deterministic) gradient descent, by noting that $\Dom \left( \frac{f(x_i)}{g(x_i;\omega)} \right) = - \frac{f(x_i)}{g(x_i;\omega)} \Dom \log g(x_i;\omega)$.

Thus, in order to find an IS distribution $g$ from a family of distributions $g(x;\omega)$, we draw $N_{\textrm{SAA}}$ samples from the original distribution $f$, and solve the SAA \eqref{eq:IS_SAA} to obtain the optimal $\omega$. We now describe how this procedure is applied for estimating the CVaR sensitivity $\Dth\cvar (R;\theta)$.

\subsection{IS Estimate for CVaR Sensitivity}\label{sec:IS_CVaR}
We recall the setting of Proposition \ref{prop:grad_general}, and assume that in addition to $f_{\bX,Y}(\bx,y;\theta)$ we have access to a family of distributions $g_{\bX,Y}\left(\bx,y;\theta,\omega\right)$ parameterized by $\omega$. We follow the procedure outlined above and, using Proposition \ref{prop:grad_general}, set
\begin{equation*}
H_j(\bX,Y) = \frac{1}{\alpha} \left(\frac{\partial {\log f_Y(Y;\theta)}}{\partial \theta_j}+\frac{\partial {\log f_{\bX|Y}(\bX|Y;\theta)}}{\partial \theta_j}\right)\left(R - \var (R;\theta)\right) \mathbf{1}_{R\leq \var (R;\theta)}.
\end{equation*}
However, since $\var (R;\theta)$ is not known in advance, we need a procedure for estimating it in order to plug it into Eq. \eqref{eq:IS_general}. The empirical quantile $\evar$ of Eq. \eqref{eq:empirical_VaR} is not suitable since it uses samples from $f_{\bX,Y}(\bx,y;\theta)$. Thus, we require an IS estimator for $\var (R;\theta)$ as well. Such was proposed by \citet{glynn1996importance}. Let $\hat{F}_{\is}(z)$ denote the IS empirical C.D.F. of $R$:
$
\hat{F}_{\is}(z) \doteq \frac{1}{N} \sum_{i=1}^{N} \frac{f_{\bX,Y}(\bx_i,y_i;\theta)}{g_{\bX,Y}\left(\bx_i,y_i;\theta,\omega\right)} \mathbf{1}_{r(\bx_{i},y_i)\leq z}.
$
Then, the IS empirical VaR is given by
\begin{equation}\label{eq:IS_empirical_VaR}
\evar_{\is} = \inf_z \hat{F}_{\is}(z) \geq \alpha.
\end{equation}
We also need to modify the variance minimization method, as we are not estimating a scalar function but a gradient in $\R^k$. We assume independence between the elements, and replace $H(x_i)^2$ in Eq. \eqref{eq:IS_SAA} with $\sum_{j=1}^{k} H_j(x_i)^2$.

Let us now state the estimation procedure explicitly. We first draw $N_{\textrm{SAA}}$ i.i.d. samples from $f_{\bX,Y}(\bx,y;\theta)$, and find a suitable $\omega$ by solving the following equivalent of \eqref{eq:IS_SAA}

\begin{equation}\label{eq:IS_SAA_explicit}
\min_\omega \frac{1}{N_{\textrm{SAA}}}\sum_{i=1}^{N_{\textrm{SAA}}} \left[ \sum_{j=1}^{k} H_j(\bx_i,y_i)^2 \frac{f_{\bX,Y}(\bx_i,y_i;\theta)}{g_{\bX,Y}\left(\bx_i,y_i;\theta,\omega\right)}\right],
\end{equation}

with $H_j(\bX,Y) = \frac{1}{\alpha} \left(\frac{\partial {\log f_Y(Y;\theta)}}{\partial \theta_j}+\frac{\partial {\log f_{\bX|Y}(\bX|Y;\theta)}}{\partial \theta_j}\right) (r(\bX,Y)-\evar) \mathbf{1}_{r(\bX,Y)\leq \evar}$, where $\evar$ is given in \eqref{eq:empirical_VaR}.

We then run the \texttt{IS\_GCVaR} algorithm, as follows. We draw $N$ i.i.d. samples $\bx_{1},y_1,\dots,\bx_{N},y_N$ from $g_{\bX,Y}\left(\bx,y;\theta,\omega\right)$. The IS estimate of the CVaR gradient $\dcvarIS$ is given by
\begin{equation}\label{eq:IS_alg}
\dcvarIS =
\frac{1}{\alpha N}\sum_{i=1}^{N} \left(\frac{\partial {\log f_Y(y_i;\theta)}}{\partial \theta_j}+\frac{\partial {\log f_{\bX|Y}(\bx_i|y_i;\theta)}}{\partial \theta_j}\right)\frac{f_{\bX,Y}(\bx_i,y_i;\theta) (r(\bx_{i},y_i)-\evar_{\is}) \mathbf{1}_{r(\bx_{i},y_i)\leq \evar_{\is}}}{g_{\bX,Y}\left(\bx_i,y_i;\theta,\omega\right)},
\end{equation}
where $\evar_{\is}$ is given in \eqref{eq:IS_empirical_VaR}.

\begin{algorithm} \label{alg:IS_GCVaR}
\caption{\texttt{IS\_GCVaR}}
1: \alggiven
\begin{itemize}
\item CVaR level $\alpha$
\item A reward function $r(\bx,y):\R^n\bigotimes \cY \to \R \to \R$
\item A density function $f_{\bX,Y}(\bx,y;\theta)$
\item A density function $g_{\bX,Y}\left(\bx,y;\theta\right)$
\item A sequence $\bx_{1},y_1,\dots,\bx_{N},y_N \sim g_{\bX,Y}$, i.i.d.
\end{itemize}
2: Set $\bx_1^s,y_1^s\dots,\bx_N^s,y_N^s = \textrm{Sort}\left(\bx_{1},y_1,\dots,\bx_{N},y_N\right)$ by $r(\bx,y)$

3: For $i = 1,\dots,N$ do
\begin{equation*}
L(i) = \sum_{j=1}^{i} f_{\bX,Y}\left(\bx^s_j,y^s_j;\theta\right) / g_{\bX,Y}\left(\bx^s_j,y^s_j;\theta\right)
\end{equation*}

4: Set $l = \argmin_i L(i) \geq \alpha $

5: Set $\evar_{\is} = r(\bx_{l}^s,y_l^s)$

6: For $j = 1,\dots,k$ do
\begin{equation*}
\begin{split}
&\dcvarIS =
\frac{1}{\alpha N}\sum_{i=1}^{N} \left(\frac{\partial {\log f_Y(y_i;\theta)}}{\partial \theta_j}+\frac{\partial {\log f_{\bX|Y}(\bx_i|y_i;\theta)}}{\partial \theta_j}\right)\frac{f_{\bX,Y}(\bx_i,y_i;\theta) (r(\bx_{i},y_i)-\evar_{\is}) \mathbf{1}_{r(\bx_{i},y_i)\leq \evar_{\is}}}{g_{\bX,Y}\left(\bx_i,y_i;\theta\right)},
\end{split}
\end{equation*}
7: \algreturn $\Delta^{\is}_{1;N},\dots,\Delta^{\is}_{k;N}$
\end{algorithm}

Note that in our SAA program for finding $\omega$, we estimate $\var$ using crude Monte Carlo. In principle, IS may also be used for that estimate as well, with an additional optimization process for finding a suitable sampling distribution. However, a typical application of the CVaR gradient is in optimization of $\theta$ by stochastic gradient descent. There, one only needs to update $\omega$ intermittently, therefore a large sample size $N_{\textrm{SAA}}$ is affordable and IS is not needed.

So far, we have not discussed how the parameterized distribution family $g_{\bX,Y}\left(\bx,y;\theta,\omega\right)$ is obtained. While there are some standard approaches such as exponential tilting \cite{rubinstein2011simulation}, this task typically requires some domain knowledge.
For the RL domain, we present a heuristic method for selecting $g_{\bX,Y}\left(\bx,y;\theta,\omega\right)$.

\subsection{CVaR Policy Gradient with Importance Sampling}\label{sec:CVaR_PG_IS}

As explained earlier, when dealing with small values of $\alpha$, an IS scheme may help reduce the variance of the CVaR gradient estimator. In this section, we apply the IS estimator to the RL domain. As is typical in IS, the main difficulty is finding a suitable sampling distribution, and actually sampling from it. In RL, a natural method for modifying the trajectory distribution is by modifying the MDP transition probabilities. We note, however, that by such our method actually requires access to a simulator of this modified MDP. In many applications a simulator of the original system is available anyway, thus modifying it should not be a problem.

Consider the RL setting of Section \ref{sec:RL}, and denote the original MDP by $M$. The P.D.F. of a trajectory $\{X,Y\}$ from the MDP $M$, where, as defined in the main text $Y \doteq s_0,a_0,s_1,a_1,\dots,s_\tau, X \doteq \rho_0,\rho_1,\dots,\rho_{\tau-1}$ is given by
\begin{equation*}
f_{\bX,Y}(\bx,y;\theta) =
\zeta_0(s_0) \prod_{t=0}^{\tau-1} f_{a|s}(a_t|s_t;\theta) f_{\rho|s,a} (\rho|s_t,a_t) f_{s'|s,a}(s_{t+1}|s_t,a_t).
\end{equation*}
Consider now an MDP $\hat{M}$ that is similar to the original MDP $M$ but with transition probabilities $\hat{f}_{s'|s,a}(s'|s,a;\omega)$, where $\omega$ is some controllable parameter. We will later specify $\hat{f}_{s'|s,a}(s'|s,a;\omega)$ explicitly, but for now, observe that the P.D.F. of a trajectory $\{X,Y\}$ from the MDP $\hat{M}$ is given by
\begin{equation*}
g_{\bX,Y}(\bx,y;\theta,\omega) =
\zeta_0(s_0) \prod_{t=0}^{\tau-1} f_{a|s}(a_t|s_t;\theta) f_{\rho|s,a} (\rho|s_t,a_t) \hat{f}_{s'|s,a}(s_{t+1}|s_t,a_t;\omega).
\end{equation*}
and therefore
\begin{equation}\label{eq:MDP_f_g}
\frac{f_{\bX,Y}(\bx,y;\theta)}{g_{\bX,Y}(\bx,y;\theta,\omega)} = \prod_{t=0}^{\tau-1} \frac{f_{s'|s,a}(s_{t+1}|s_t,a_t)}{\hat{f}_{s'|s,a}(s_{t+1}|s_t,a_t;\omega)}.
\end{equation}
Using Eq. \eqref{eq:MDP_dlogf}, Eq. \eqref{eq:MDP_f_g}, and the fact that $\partial \log f_{\bX|Y}\left(x_{i}|y_i;\theta\right) \!/\! \partial \theta\!=\!0$ in our formulation, the \texttt{IS\_GCVaR} algorithm may be used to obtain the IS estimated gradient $\dcvarIS$, which may then be used instead of $\dcvar$ in the parameter update equation \eqref{eq:theta_update_SGD}.

We now turn to the problem of choosing the transition probabilities $\hat{f}_{s'|s,a}(s'|s,a;\omega)$ in the MDP $\hat{M}$, and propose a heuristic approach that is suitable for the RL domain. We first observe that by definition, the CVaR takes into account only the `worst' trajectories for a given policy, therefore a suitable IS distribution should give more weight to such bad outcomes in some sense. The difficulty is how to modify the transition probabilities, which are defined per state, such that the whole trajectory will be `bad'.
We note that this difficulty is in a sense opposite to the action selection problem: how to choose an action at each state such that the long-term reward is high. Action selection is a fundamental task in RL, and has a very elegant solution, which inspires our IS approach.

A standard approach to action selection is through the \emph{value-function} $V(s)$ \cite{sutton_reinforcement_1998}, which assigns to each state $s$ its expected long term outcome $\E\left[B|s_0 = s\right]$ under the current policy. Once the value function is known, the `greedy selection' rule selects the action that maximizes the expected value of the next state.
The intuition behind this rule is that since $V(s)$ captures the long-term return from $s$, states with higher values lead to better trajectories, and should be preferred.
\newline
By a similar reasoning, we expect that encouraging transitions to low-valued states will produce worse trajectories. We thus propose the following heuristic for the transition probabilities $\hat{f}_{s'|s,a}(s'|s,a;\omega)$. Assume that we have access to an approximate value function $\tilde{V}(s)$ for each state. We propose the following IS transitions for $\hat{M}$
\begin{equation}\label{eq:IS_MDP_heuristic}
\hat{f}_{s'|s,a}(s'|s,a;\omega) = \frac{f_{s'|s,a}(s'|s,a) \exp \left( -\omega \tilde{V}(s';\theta)\right)}{\sum_y f_{s'|s,a}(y|s,a) \exp \left( -\omega \tilde{V}(y;\theta)\right)}.
\end{equation}
Note that increasing $\omega$ encourages transitions to low value states, thus increasing the probability of `bad' trajectories.

Obtaining an approximate value function for a given policy has been studied extensively in RL literature, and many efficient solutions for this task are known, such as LSTD \citep{boyan2002technical} and TD($\lambda$) \citep{sutton_reinforcement_1998}. Here, we don't restrict ourselves to a specific method.

\subsection{Empirical Results with Importance Sampling}\label{sec:CVaR_PG_IS}

We report the full details about the experimental results with importance sampling mentioned in the main text.

Fig. \ref{IS} demonstrates the importance of IS in optimizing the CVaR when $\alpha$ is small. We chose $\alpha=0.01$, and $N = 200$, and compared the naive \texttt{GCVaR} against \texttt{IS\_GCVaR}. As our value function approximation, we exploited the fact that the soft-max policy uses $\phi(s,a)^{\T} \theta$ as a sort of state-action value function, and therefore set
$\tilde{V}(s) = \max_a \phi(s,a)^{\T} \theta$. We chose $\omega$ using SAA, with trajectories from the initial policy $\theta_0$.
We observe that \texttt{IS\_GCVaR} converges significantly faster than \texttt{GCVaR}, due to the lower variance in gradient estimation.

\begin{figure}
\vskip 0.2in
\begin{center}
\centerline{
\includegraphics[scale=0.33, clip=true]{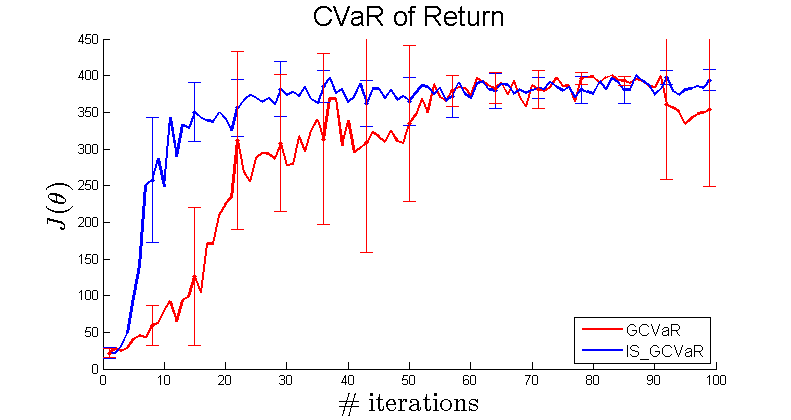}
}
\caption{\textbf{\texttt{IS\_GCVaR} vs. \texttt{GCVaR}} CVaR ($\alpha=0.01$) of the return for \texttt{IS\_GCVaR} and \texttt{GCVaR} vs. iteration.}
\label{IS}
\end{center}
\vskip -0.2in
\end{figure}

\end{document}